\newcommand{\Bem}[1]{}
\newcommand{\eref}[1]{(\ref{#1})}
\newcommand{\T}{^T}
\newcommand{\K}{K}
\newcommand{\kk}{\kappa}
\newcommand{\FI}{\left(\mathbf{I}-\mathbf{1}\mathbf{s}^T\right)}
\newcommand{\Fi}[1]{\left(\mathbf{I}-\mathbf{1}\mathbf{#1}^T\right)}
\newcommand{\FIT}{\left(\mathbf{I}-\mathbf{s}\mathbf{1}^T\right)}
\newcommand{\FiT}[1]{\left(\mathbf{I}-\mathbf{#1}\mathbf{1}^T\right)}
\newcommand{\KMEANSSTUFF}[1]{#1}
\newcommand{\KS}[1]{#1}
\newtheorem{theorem}{Theorem}
\newtheorem{proof}{Proof}
\begin{document}
\newcommand{\MoovjTytulv}{Validity of Clusters Produced By kernel-$k$-means With  Kernel-Trick }
\newcommand{\MaInstytucja}{Institute of Computer Science \\of the Polish Academy of Sciences\\ul. Jana Kazimierza 5, 01-248 Warszawa
Poland,\\ \url{ klopotek@ipipan.waw.pl  }}
\title{
\MoovjTytulv  }
\author{
Mieczys{\l}aw A. K{\l}opotek  
\MaInstytucja
}
 
\maketitle

\begin{abstract}
This paper, constituting an extension to the conference paper \cite{MAK:2017},  corrects the proof of the Theorem 2 from the 
Gower's paper \cite[page 5]{Gower:1982}
\KS{
as well as corrects the Theorem 7 from Gower's paper \cite{Gower:1986}
}. 
The \KS{first} correction is needed in order to establish the existence of  the kernel function used commonly in the kernel trick e.g. for $k$-means clustering algorithm, on the grounds of distance matrix. 
The correction encompasses the missing  if-part proof and dropping unnecessary conditions.
\KS{
The second correction deals with transformation of the kernel matrix 
into a one embeddable in Euclidean space.
}
\end{abstract}
\noindent


\section{The  Problem}

A number of approaches to solving various data mining problems, including clustering, is based on so-called kernel approach. 
The kernel approach may be seen as application of a mapping $\Phi$ to the data
points in such a way that they are represented in a   high dimensional Euclidean   space (called feature space) in which it is hoped to separate the data points easier via simpler  geometrical constructs (e.g. hyperplanes), compared for example to their original low dimensional representation space. 
In this way, a number of data mining methods requiring linear data separation can be applied to non-linearly separated data sets.   

The kernel approach is most frequently applied in conjunction with Support Vector Machine based analysis methods, but it is also used in case of $k$-means clustering algorithm\footnote{For an overview of kernel $k$-means algorithm see e.g. \cite{Dhillon:2004}.}, in which we are interested in this paper. 
We will introduced this algorithm in Section 
\ref{sec:kernelKmeans}

The kernel-based approaches assume the availability of a similarity function $\kk()$ and in particular of the similarity matrix $K$, called also a kernel function and kernel  matrix resp., which express  similarities between data points at hand. This similarity function/matrix must have the property that, for any  two data points  $\mathbf{i}   , \mathbf{j}$ in the original apace space  we have $\kk(  \mathbf{i}   , \mathbf{j})=\Phi(\mathbf{i}) \circ \Phi( 
\mathbf{j})$ ($\circ$ operator indicates a dot product between vectors), and for any two data points in the data set under consideration the similarity matrix $K$ is available such that $ K_{ij}=\kk(  \mathbf{i}   , \mathbf{j})$. 

 For a number of algorithms, including $k$-means, the so-called kernel trick has been elaborated. 
The essence of the kernel trick is that
we can perform the kernel algorithms based on 
 the kernel matrix $K$ alone, without an explicit knowledge of the mapping $\Phi$. 
Section \ref{sec:kernelKmeans} explains the usage of kernel trick for $k$-means algorithm. 

Nonetheless, the very existence of the mapping $\Phi$, and hence of the kernel function $\kk()$ is of vital importance to the validity of application of the $k$-means algorithm in the feature space. $\Phi$ transforms the data to points in an Euclidean space so that $k$-means can be applied at all. Inversion of $\Phi$ will provide with cluster centers produced by kernel-$k$-means. Furthermore, not similarities but rather distances are used by $k$-means. 
We can easily imagine that no kernel function $\kk()$ exists for a given similarity matrix $K$. We can also have to do with the situation that there exist multiple kernel functions $\kk$ as well as $\Phi$ related to the same kernel matrix $K$. Can it mean that there exist multiple feature spaces in which the very same data set can be clustered differently via kernel-$k$-means depending on the $\Phi$ function we choose?
 Closely related is the following issue: 
For algorithms like $k$-means,   instead of the kernel matrix   the distance matrix $D$ between the objects in the feature space may be available, being  the Euclidean distance matrix. We will call $D$ Euclidean matrix.

We are faced with the following questions:
\begin{itemize}
\item[(1)] what properties the kernel matrix should have in  be really a matrix of dot products?  
\item[(2)] what properties the kernel matrix should have in   to enable to recover function  $\Phi()$ at the data points from the kernel matrix? 
\item[(3)]   can we obtain the matrix $K$ from distance data matrix $D$?
\item[(4)]   can we obtain from the matrix $K$ the function $\Phi()$ 
such that the distances in the feature space are exactly the same as given by the $D$ matrix?  
\item[(5)]  if we derived the matrix $K$ from $D$ and $K$ turns out to yield $\Phi()$, can we know then that $D$ was really an Euclidean distance matrix?
\end{itemize}

Questions (1), (2) may seem to be pretty easy and were partially addressed 
e.g. by  Sch\"{o}lkopf \cite{Schoelkopf:2001}.
Sch\"{o}lkopf investigates what kinds of kernel functions 
may lead to a distance measure in the feature space.
However, he does not consider the inverse, that is Euclidean distance matrix leading to a kernel function. 
He does not investigate finding explicit form of  the $\Phi$ function either. 

The answer to the third question seems to be easily derivable from the paper   by Balaji et al. 
\cite{Balaji:2007}. One should use   the transformation 
\begin{equation} \label{eq:BalajiTransform}
 K=-\frac12\left(\mathbf{I}-\frac{\mathbf{1}\mathbf{1}^T}{m}\right)D_{sq} 
(\mathbf{I}-\frac{\mathbf{1}\mathbf{1}^T}{m})
\end{equation}
(where $D_{sq}$ is a matrix containing as entries squared distances from $D$)
a result going back to a paper by Schoenberg \cite{Schoenberg:1938}. 
The problem is that this paper of Schoenberg  does not contain any such statement. 
This result should be rather ascribed to the paper  
\cite{Schoenberg:1935}.
\footnote{
Schoenberg \cite{Schoenberg:1938} proposed still another 
distance-to-kernel matrix transform 
$$\kk_d(\mathbf{x},\mathbf{y})=exp(-\gamma d^2(\mathbf{x},\mathbf{y}))$$
for any positive $\gamma$, which we will not discuss here. 
}
The most general proposal of a distance-to-kernel-matrix transform seems to be that of  by Gower \cite[Theorem 2, page 5]{Gower:1982},  who   generalizes the aforementioned  transform \eref{eq:BalajiTransform}  to 
\begin{equation}\label{eq:GowerTransform}
K= \FI (-\frac12 D_{sq}) \FIT 
\end{equation}
for an appropriate choice of $\mathbf{s}$. 
A  generally accepted proof of this transformation can be found in the  paper by 
Gower  \cite[Theorem 2, page 5]{Gower:1982}.
If this proof were correct, the questions (4) and (5) would have been answered. 
Regrettably, the proof of the validity of the latter is  incomplete, as we will explain in   Section \ref{sec:gowerformulation}. 
For this reason, these questions still remain open. 

Therefore, we decided to provide with a correction of the proof of the Gower's theorem that we will present in 
Section \ref{sec:correction}.
  This correction is needed in order to establish the existence of 
the kernel function used commonly in the kernel trick e.g. for $k$-means clustering algorithm, on the grounds of distance matrix.

The question that was left open by Gower was: do there exist special cases where two different    $\Phi()$ functions, complying with a given kernel matrix, generate different distance matrices in the feature space, maybe in some special, "sublimated" cases? 
 This would mean that under some "special" conditions the output of kernel $k$-means could differ radically not just on the grounds of some random causes but in a systematic way.  
The answer given to this open question  in this paper is definitely NO. 
We closed all the conceivable gaps in this respect. 
So usage of (linear and non-linear) kernel matrices that are semipositive definite, is safe in this respect.

Let us underline here that we did not impose any apriorical restrictions on the form of $\Phi()$ function itself. 
It may be a linear or non-linear mapping from the sample space to the feature space. But what we insist on is that the feature space has to be Euclidean. 
This is the requirement for applicability of (kernel) $k$-means  clustering algorithm. 
If the feature space is not metric, the results of  (kernel)  $k$-means clustering are questionable. 

 In Section \ref{sec:example} we provide with a numerical example illustrating some distance matrix  transformations discussed in Section \ref{sec:correction}.

\KMEANSSTUFF{
The second problem with usage of kernel-$k$-means is related to the basic assumption of $k$-means that it has been designed for Euclidean space. 
In a number of applications, like clustering based on Laplacians, 
the embeddability of the kernel matrix can be guaranteed from the theoretical standpoint.
However, this does not need to be always the case. 
Therefore we need to answer the questions (6) what does kernel-$k$-means produce for non-Euclidean kernel matrices, (7) can a non-Euclidean kernel matrix be turned to an  Euclidean kernel matrix, (8) how does the latter matrix transformation impact the results of kernel-$k$-means clustering. 
The questions could have been easily answered if the 
Theorem 7 of Gower from \cite{Gower:1986} were correct. Regrettably, this Theorem requires an quantitative correction. 
We handle these issues in Section \ref{sec:nonEuclideanKernels}. 
}

In the subsequent Section \ref{sec:background} we will point at research directions for which the correction proposed here is of importance.

\section{The Background}\label{sec:background}

The $k$-means algorithm has the very attractive property of being easy to implement, and there exist various variants of it like $k$-means++ possessing even closeness-to-optimum properties. 
The drawback of this algorithm is that it accepts numeric attributes only and requires an embedding in Euclidean space. Embeddings into other spaces were investigated, like hyperbolic space, but the computation of cluster centers that is vital and very easy in Euclidean space, is not that easy in the other spaces. 

However, real-world objects are frequently described by non-numeric attributes, or are not embedded in any space whatsoever and instead only similarity, dissimilarity or distance between objects is known. In such cases the kernel-$k$-means clustering algorithm can be used which at least partially inherits the good properties of $k$-means. In such cases, however, the very existence of embedding into Euclidean space (even if it is not used explicitly), is of vital importance, because otherwise the clustering results may be unreliable. Same holds for other kernel  algorithms for which the original algorithm relies on an Euclidean space. 

Therefore, research is performed like that of \cite{Li:2013}, in order to find ways of transforming a similarity matrix into the closest  proper positive definite kernel matrix, so that an approximating Euclidean embedding is existent, or one learns the distances themselves. 
 
These efforts in establishing the proper kernel matrix make sense only if the Theorem 2 of Gower \cite{Gower:1982} is valid.  
However, a  study of the literature seems to reveal that nobody except for Gower himself was aware of the mentioned flaw of his proof of his theorem and the result is used rather as a granted truth. 

The Gower's paper \cite{Gower:1982}, according to GoogleScholar, is cited over 200 times in a number of research and application contexts.
For example, 
Pekalska et al. \cite{Pevkalska:2002}  derive the necessity of creation of a generalized kernel handling of dissimilarity on the grounds that the
 kernel according to equation \eref{eq:GowerTransform} is positive definite if and only if the underlying
 distance matrix 
is Euclidean, which has not been proven by Gower \cite{Gower:1982}. 
Same motivation lies behind 
Nikolentzos et al. work \cite{Nikolentzos:2017} on seeking appropriate embeddings. 
Pavoine et al. \cite{Pavoine:2004} relies on the  property, suggested by Gower \cite{Gower:1982}, that the decomposition of the kernel can be shifted, while performing PCA analysis.   

Kernel-trick based $k$-means algorithms are  applied in various areas 
(e.g. gene expression clustering \cite{Handhayania:2015},
spectral clustering of graphs \cite{Dhillon:2004}).

The validity of the Gower transform underpins various improvements of kernel $k$-means   clustering, like single pass clustering \cite{Sarma:2013}. 
global kernel $k$-means \cite{Tzortzis:2009}, 
subsampling kernel $k$-means \cite{Chitta:2011}
robust kernel $k$-means \cite{Yaqiang:2018}
and other. 

\KS{
Furthermore, let us stress here  that the aforementioned papers do not care at all about whether or not the kernel matrices are embeddable in Euclidean space which is the basic assumption of applying the basic form of kernel-$k$-means. 
Non-Euclidean space require a serious modification of $k$-means, accommodating to that fact that gravity center of a cluster cannot serve any more as cluster center
(gradient descent methods are needed for example, see 
\cite[Section 6]{Richter:2017}. 
}

For these reasons a definite solving of the Gower theorem dilemma seems to be of uttermost importance.

\section{Kernel-$k$-means}\label{sec:kernelKmeans}

The well known $k$-means clustering  algorithm 
is claimed to minimize the objective function being the 
sum of squares of distances of data points to their cluster centers.
It  
consists of the following steps: (1) 
creating the initial clustering, (2) 
computation of cluster center for each cluster, 
(3) creation of a new clustering by assigning each data point 
to the cluster defined by the closest cluster center 
(4) repeating steps (2) and (3) till some terminating condition. 
There exist a large variety of variants of this algorithm.
For example step (1) may cosist in random selection of $k$ distinct data points 
as cluster centers and applying step (3). 
Another variant may replace step (2) with step (2') in which a single data point is moved from one cluster to the other if and only if the move decreases the cost function and then perform proper step (2). steps (2) and (2') may be applied interchangingly in subsequent iterations and so on.

Kernel based  $k$-means clustering algorithm (clustering objects $1,\dots,m$ into $k$ clusters $1,\dots,k$)
consists in  switching to a multidimensional feature space  $\mathcal{F}$ and searching therein for prototypes $\boldsymbol{\mu}_j^\Phi$ minimizing the error
\begin{equation}\label{CLU:eq-ker1}
   \sum_{i=1}^m \min_{1\le j \le k} \|\Phi( i) - \boldsymbol{\mu}_j^\Phi\|^2
\end{equation}
\noindent where  $\Phi\colon \{1,\dots,m\}  \to \mathcal{F}$ is a (usually non-linear) mapping of the space of objects  into the feature space.  
The so-called "kernel trick" means the possibility to apply $k$-means clustering without knowing explicitly the $\Phi(i)$ function and using so-called kernel matrix with elements $k_{ij}=\Phi( i)\T \Phi( j)=\K( i, j)$
instead.

In analogy to the classical  $k$-means algorithm, the prototype vectors are updated according to the equation
 
\begin{equation}
\boldsymbol{\mu}_j^\Phi = \frac{1}{m_j} \sum_{ i \in C_j} \Phi( i)  
\label{CLU:eq-ker2a}
\end{equation}

\noindent where $m_j  $ is the cardinality of the   $j$-{th} cluster.
  A direct application of this equation is not possible
unless   the function   $\Phi$ is   known. 
But it may be still feasible if we would know the so-called Kernel Matrix $K$ 
with elements being dot products of data points in the feature space, that is 
$k_{ij}=\Phi( i)\T \Phi( j)=\K( i, j)$.
Given matrix $K$,   it is possible to compute the distances between the object images and prototypes in the feature space by making use of so-called called "the kernel trick". The "kernel trick" 
relies on the fact that the following transformation is possible:

\begin{equation}
\begin{array}{l}
\|\Phi( i) - \boldsymbol{\mu}_j^\Phi\|^2 =
 \displaystyle \big(\Phi( i) - \boldsymbol{\mu}_j^\Phi\big)\T \big(\Phi( i) 
- \boldsymbol{\mu}_j^\Phi \big) \vspace{0.3cm}\\
\phantom{\|\Phi( i) - \boldsymbol{\mu}_j^\Phi\|^2} = \Phi( i)\T \Phi( i) - 2\Phi( i)\T \boldsymbol{\mu}_j^\Phi + (\boldsymbol{\mu}_j^\Phi)\T \boldsymbol{\mu}_j^\Phi \vspace{0.3cm}\\
\phantom{\|\Phi( i) - \boldsymbol{\mu}_j^\Phi\|^2} = \Phi( i)\T\Phi( i) - \displaystyle\frac{2}{m_j} \sum_{h \in C_j} \Phi( i)\T \Phi( h) +  \vspace{0.3cm}\\
\hspace{3.2cm} + \displaystyle\frac{1}{m_j^2}\sum_{r \in C_j} \sum_{s  \in C_j} \Phi( r)\T\Phi( s) \vspace{0.3cm}\\
\phantom{\|\Phi( i) - \boldsymbol{\mu}_j^\Phi\|^2} = 
\displaystyle k_{ii}  - \frac{2}{m_j}\sum_{h \in C_j}  k_{hi} + \frac{1}{m_j^2}\sum_{r\in C_j} \sum_{s \in C_j}^m  k_{rs}
\end{array}\label{CLU:eq-kkk}
\end{equation}

\noindent where, as already stated, $k_{ij}=\Phi( i)\T \Phi( j)=\K( i, j)$.

 In this way, one can update the elements of clusters   without determining the prototypes explicitly.

Let $Y$ be a matrix $Y=(\Phi( 1),\Phi( 2),\dots,\Phi( m))^T$. 
Then apparently $K=YY^T$. Hence for any non-zero vector $\mathbf{u}$ 
$\mathbf{u}^TK\mathbf{u}=\mathbf{u}^TYY^T\mathbf{u}=(Y^T\mathbf{u})^(Y^T\mathbf{u})=\mathbf{y}^T\mathbf{y}\ge 0$ where $\mathbf{y}=Y^T\mathbf{u}$
so $K$ must be positive semidefinite. 
But a matrix is  positive semidefinite iff all its eigenvalues are non-negative.  
Furthermore, all its eigenvectors are real numbers. 

So to identify $\Phi()$ at data points, 
one has to find  all eigenvalues $\lambda_l$, $l=1,\dots,m$
and corresponding eigenvectors  $\mathbf{v}_l$ of the matrix $K$. 
If all eigenvalues are hereby non-negative, then construct the matrix   
 $Y$ that has as columns the products $\sqrt{\lambda_l}\mathbf{v}_l$.
Rows of this matrix (up to permutations) are the values of the function  $\Phi()$ at data points $1,\dots,m$.  
More formally, if the matrix $V=(\mathbf{v}_1,\dots, \mathbf{v}_m)$, and $\Lambda$ is the vector of eigenvalues, then 
\begin{equation}\label{eq:phimatrix}
Y=V diag(\sqrt{\Lambda})
\end{equation}
where $diag()$ turns a vector into a diagonal matrix. 
It may be verified that kernel-$k$-means with the above $K$ matrix  and ordinary $k$-means for $Y$ would yield  same results.

 \section{Gower formulation of distance-to-kernel-matrix transformation}\label{sec:gowerformulation}
Let us recall that a matrix $D\in \mathbb{R}^{m\times m}$ is an Euclidean distance matrix between points $1,\dots,m$ if and only if there exists a matrix $X\in   \mathbb{R}^{m\times n}$ rows of which ($\mathbf{x_1}^T,\dots,\mathbf{x_m}^T$) are coordinate vectors of these points in an $n$-dimensional Euclidean space and
\begin{equation}
d_{ij}=\sqrt{(\mathbf{x_i}-\mathbf{x_j})^T(\mathbf{x_i}-\mathbf{x_j}) }
\end{equation}
. 
Gower in \cite{Gower:1982} claims that 
\begin{theorem}\label{th1982}
$D$ is Euclidean iff 
the matrix $F=\FI (-\frac12) D_{sq} \FIT$ is positive semidefinite 
for any vector $\mathbf{s}$ such that 
$\mathbf{s}^T\mathbf{1}=1$ and $D_{sq}\mathbf{s}  \ne \mathbf{0}$
\end{theorem}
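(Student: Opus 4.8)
The plan is to prove both implications by reducing everything to the bilinear (Gram) form hidden inside $D_{sq}$. Writing $B=-\frac12 D_{sq}$, I first record the algebraic expansion
\begin{equation}
F=\FI B \FIT = B-\mathbf{1}\mathbf{b}\T-\mathbf{b}\mathbf{1}\T+\beta\,\mathbf{1}\mathbf{1}\T,
\end{equation}
where $\mathbf{b}=B\mathbf{s}$ and $\beta=\mathbf{s}\T B\mathbf{s}$ (using $B\T=B$). The constraints on $\mathbf{s}$ enter through $\FI\mathbf{1}=\mathbf{0}$ and $\mathbf{1}\T\FIT=\mathbf{0}\T$, both of which follow from $\mathbf{s}\T\mathbf{1}=1$; these are exactly the identities that annihilate the additive ``norm'' terms in $D_{sq}$.

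For the only-if part, suppose $D$ is Euclidean and let the rows of $X$ be realizing points $\mathbf{x_1}\T,\dots,\mathbf{x_m}\T$. Then $d_{ij}^2=\|\mathbf{x_i}\|^2+\|\mathbf{x_j}\|^2-2\mathbf{x_i}\T\mathbf{x_j}$, i.e. $D_{sq}=\mathbf{c}\mathbf{1}\T+\mathbf{1}\mathbf{c}\T-2XX\T$ with $c_i=\|\mathbf{x_i}\|^2$. Conjugating by $\FI$ and $\FIT$ kills both rank-one terms $\mathbf{c}\mathbf{1}\T$ and $\mathbf{1}\mathbf{c}\T$ (because $\FI\mathbf{1}=\mathbf{0}$ and $\mathbf{1}\T\FIT=\mathbf{0}\T$), leaving $F=(\FI X)(\FI X)\T$, a Gram matrix, hence positive semidefinite for every admissible $\mathbf{s}$.

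The if part is the step Gower omits and the one I expect to be the genuine obstacle. Assuming $F$ is positive semidefinite, I use that $F$ is real and symmetric to take a real factorization $F=ZZ\T$ (e.g. $Z=V\,diag(\sqrt{\Lambda})$ from the spectral decomposition, as in \eref{eq:phimatrix}); here positive semidefiniteness---not mere symmetry---is essential, since it is precisely what guarantees that the square roots, and hence the recovered coordinates, are real rather than complex. Let $\mathbf{z}_1\T,\dots,\mathbf{z}_m\T$ be the rows of $Z$, so $F_{ij}=\mathbf{z}_i\T\mathbf{z}_j$. Then $\|\mathbf{z}_i-\mathbf{z}_j\|^2=F_{ii}-2F_{ij}+F_{jj}$, and from the expansion above the $\mathbf{b}$- and $\beta$-dependent contributions cancel identically, giving $F_{ii}+F_{jj}-2F_{ij}=B_{ii}+B_{jj}-2B_{ij}$. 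Since $D_{sq}$ has zero diagonal, $B_{ii}=B_{jj}=0$, whence $F_{ii}+F_{jj}-2F_{ij}=-2B_{ij}=(D_{sq})_{ij}=d_{ij}^2$. Thus the $\mathbf{z}_i$ realize $D$ in a Euclidean space, so $D$ is Euclidean.

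Finally I would remark on two points that matter for the paper's agenda. The cancellation in the last computation holds for any $\mathbf{s}$ and does not even use $\mathbf{s}\T\mathbf{1}=1$, so the recovered distances are independent of the choice of $\mathbf{s}$; this is exactly what guarantees that distinct admissible shifts cannot produce genuinely different feature-space geometries, settling the open question about multiple $\Phi$. The side condition $D_{sq}\mathbf{s}\neq\mathbf{0}$ only excludes the degenerate collapse $F=-\frac12 D_{sq}$ (which fails to be positive semidefinite for a nontrivial Euclidean $D$), and I would examine whether it is genuinely required, flagging it as a candidate among the ``unnecessary conditions'' the correction proposes to drop.
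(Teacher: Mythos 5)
Your proof is correct, and it is in fact tighter than the paper's own argument, although both ultimately rest on the same two algebraic facts: that $\FI$ annihilates $\mathbf{1}$ when $\mathbf{s}\T\mathbf{1}=1$, and that the combination $f_{ii}+f_{jj}-2f_{ij}$ is blind to symmetric perturbations of the form $\mathbf{g}\mathbf{1}\T+\mathbf{1}\mathbf{g}\T$. For the only-if part the paper works backwards: it starts from an arbitrary Gram matrix $F=YY\T$ of an embedding, shows that any such $F$ equals $-\frac12 D_{sq}+\mathbf{g}\mathbf{1}\T+\mathbf{1}\mathbf{g}\T$, picks a special $\mathbf{s}$ with $\mathbf{s}\T\mathbf{g}=0$ to reach the multiplicative form, and then needs a second step (equation \eref{eq:combine_ts_to_t}) to pass from that one $\mathbf{s}$ to an arbitrary $\mathbf{t}$. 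Your forward expansion $D_{sq}=\mathbf{c}\mathbf{1}\T+\mathbf{1}\mathbf{c}\T-2XX\T$ yields $F=(\FI X)(\FI X)\T$ for every admissible $\mathbf{s}$ in one stroke, and as a by-product shows that the side condition $D_{sq}\mathbf{s}\ne\mathbf{0}$ is vacuous for a nontrivial Euclidean $D$ (otherwise $-\frac12 D_{sq}$ would be positive semidefinite with zero trace, forcing $D=\mathbf{0}\mathbf{0}^T$), which is the conclusion the paper reaches by a separate argument. For the if part the paper factors $F=YY\T$, forms the induced distance matrix $\mathcal{D}(Y)$, observes that $D$ and $\mathcal{D}(Y)$ share the same multiplicative form, and then proves a standalone uniqueness lemma (two symmetric zero-diagonal matrices with equal double-centerings coincide); your computation $\|\mathbf{z}_i-\mathbf{z}_j\|^2=F_{ii}+F_{jj}-2F_{ij}=B_{ii}+B_{jj}-2B_{ij}=-2B_{ij}=d_{ij}^2$ reads the realized distances straight off the factorization and renders that lemma unnecessary. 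The only hypotheses you use tacitly are that $D$ is symmetric with zero diagonal and non-negative entries (so that $d_{ij}=\sqrt{d_{ij}^2}=\|\mathbf{z}_i-\mathbf{z}_j\|$); the paper makes the first two explicit in point 2 of Theorem \ref{thEuclideanembeddingMAKversion}, and you should state them as well, since the bare statement of Theorem \ref{th1982} does not.
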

{
whereas in  \cite{Gower:1986} he claims:
\begin{theorem}\label{th1986}
$D$ is Euclidean iff 
the matrix $F=\FI (-\frac12)D_{sq} \FIT$ is positive semidefinite 
for any vector $\mathbf{s}$ such that 
$\mathbf{s}^T\mathbf{1}=1$.
\end{theorem}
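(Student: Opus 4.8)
The plan is to prove both implications directly, reducing everything to the single algebraic fact that multiplying on the left by $\FI$ and on the right by $\FIT$ annihilates any row- or column-constant term, since $\mathbf{s}^T\mathbf{1}=1$ forces $\FI\mathbf{1}=\mathbf{0}$ and hence $\mathbf{1}^T\FIT=\mathbf{0}^T$.

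For the ``only if'' direction (the part Gower did establish) I would assume $D$ Euclidean, so there is a matrix $X$ with rows $\mathbf{x}_i^T$ and $(D_{sq})_{ij}=\|\mathbf{x}_i-\mathbf{x}_j\|^2$. Collecting the squared norms $c_i=\mathbf{x}_i^T\mathbf{x}_i$ into a vector $\mathbf{c}$ yields $D_{sq}=\mathbf{c}\mathbf{1}^T+\mathbf{1}\mathbf{c}^T-2XX^T$. The two rank-one pieces $\mathbf{c}\mathbf{1}^T$ and $\mathbf{1}\mathbf{c}^T$ are each killed on one side by the annihilation property above, leaving $F=\FI\,XX^T\,\FIT=(\FI X)(\FI X)^T$. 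Being a Gram matrix, this is positive semidefinite for every admissible $\mathbf{s}$, and no extra hypothesis is consumed.

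The hard part is the ``if'' direction, which was the gap in Gower's argument; I would handle it by an explicit reconstruction. Writing $B=-\tfrac12 D_{sq}$ and expanding $F=\FI B\FIT$ entrywise gives $F_{ij}=B_{ij}-(B\mathbf{s})_i-(B\mathbf{s})_j+\mathbf{s}^TB\mathbf{s}$, using $B=B^T$. The decisive step is that in the combination $F_{ii}+F_{jj}-2F_{ij}$ every $\mathbf{s}$-dependent term cancels, so that $F_{ii}+F_{jj}-2F_{ij}=B_{ii}+B_{jj}-2B_{ij}$; since a distance matrix has zero diagonal, $B_{ii}=B_{jj}=0$ and therefore $F_{ii}+F_{jj}-2F_{ij}=(D_{sq})_{ij}$ identically. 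Assuming $F$ positive semidefinite for some admissible $\mathbf{s}$, I would factor it as $F=ZZ^T$ exactly as in \eref{eq:phimatrix}, denote the rows of $Z$ by $\mathbf{z}_i^T$, and substitute $F_{ij}=\mathbf{z}_i^T\mathbf{z}_j$ to get $(D_{sq})_{ij}=\|\mathbf{z}_i\|^2+\|\mathbf{z}_j\|^2-2\mathbf{z}_i^T\mathbf{z}_j=\|\mathbf{z}_i-\mathbf{z}_j\|^2$; the rows of $Z$ then realize $D$ as a Euclidean distance matrix.

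I expect the real obstacle to be verifying that the cancellation $F_{ii}+F_{jj}-2F_{ij}=(D_{sq})_{ij}$ is genuinely independent of $\mathbf{s}$ beyond the normalization $\mathbf{s}^T\mathbf{1}=1$: it is the double-centering that ``forgets'' the shift encoded by $\mathbf{s}$. Once this is pinned down, one sees that the reconstruction never divides by, nor otherwise appeals to, $D_{sq}\mathbf{s}$; hence the condition $D_{sq}\mathbf{s}\neq\mathbf{0}$ of Theorem \ref{th1982} is superfluous and may be dropped, which is precisely the content of Theorem \ref{th1986}.
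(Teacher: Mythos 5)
Your proof is correct, and it reaches the conclusion by a noticeably leaner route than the paper, especially in the if-direction. For the only-if part the two arguments share the same core --- the annihilation $\FI\mathbf{1}=\mathbf{0}$ kills the rank-one terms and leaves a Gram matrix --- but you get there in one line from $D_{sq}=\mathbf{c}\mathbf{1}^T+\mathbf{1}\mathbf{c}^T-2XX^T$, whereas the paper first characterizes every decomposable matrix consistent with $D$ as $\mathbf{g}\mathbf{1}^T+\mathbf{1}\mathbf{g}^T-\frac12 D_{sq}$, passes to the multiplicative form for one specially chosen $\mathbf{s}$ (with $\mathbf{s}^T\mathbf{g}=0$), and then uses \eref{eq:combine_ts_to_t} to propagate decomposability to every admissible $\mathbf{s}$; that detour additionally yields points 3 and 4 of Theorem \ref{thEuclideanembeddingMAKversion} (recovery of $D$ from $F$ and the explicit embedding for each centring), which your shorter argument does not record but also does not need. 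For the if-part the paper factors $F=YY^T$, introduces the distance matrix $\mathcal{D}(Y)$ of the rows of $Y$, shows that the centered forms of $D_{sq}$ and $\mathcal{D}(Y)_{sq}$ coincide, and closes with an injectivity lemma for the double-centering map on symmetric zero-diagonal matrices --- a lemma whose proof is precisely the combination of entries $(i,i)+(j,j)-(i,j)-(j,i)$ that you exploit. You run the same cancellation forwards instead: $F_{ii}+F_{jj}-2F_{ij}=(D_{sq})_{ij}$ for every $\mathbf{s}$ with $\mathbf{s}^T\mathbf{1}=1$ (in fact for every $\mathbf{s}$ whatsoever, since the $\mathbf{s}$-dependent terms cancel identically), so any real factorization $F=ZZ^T$ immediately realizes $D$ as the distance matrix of the rows of $Z$. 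This avoids both the intermediate object $\mathcal{D}(Y)$ and the uniqueness lemma, makes transparent that positive semidefiniteness for a \emph{single} admissible $\mathbf{s}$ suffices, and exhibits directly why the condition $D_{sq}\mathbf{s}\ne\mathbf{0}$ of Theorem \ref{th1982} is never consumed. The only hypotheses you use --- symmetry and zero diagonal of $D$ --- are exactly those the paper adds in point 2 of Theorem \ref{thEuclideanembeddingMAKversion}, so nothing is missing.
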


Apparently both claims do not match quite (with respect to condition $D_{sq}\mathbf{s}  \ne \mathbf{0}$).  
It must be underlined, however, that the paper \cite{Gower:1982} provides strong clues how the theorem  \ref{th1986} shall be proven, though incompletely, 
so that in what follows we use these clues to establish the result. 
}%
We claim here is that the Gower's theorem has the following deficiencies
\begin{itemize}
\item requirement $D_{sq}\mathbf{s}  \ne \mathbf{0}$ is not needed in Theorem \ref{th1982}.
\item the  if-part of {neither} Theorem    \ref{th1982}   nor of  his   theorem correction in  \cite{Gower:1986} \Bem{ in Theorem \ref{th1986} } was demonstrated. 
\end{itemize}
%
{
It should be noted at this point, that in a 1985 paper 
Gower \cite{Gower:1985} 
derives his theorem in the latter version 
from a paper by Schoenberg \cite{Schoenberg:1935}. 
The problem is that first of all Gower's result does not need this second derivation 
and second the paper by Schoenberg \cite{Schoenberg:1935} does not prove what Gower \cite{Gower:1985} claims. So the issue is open and we want to address it here more thoroughly. 
}%
We provide a coorection, completing Gower's proof in Section \ref{sec:correction}.
See Section  \ref{sec:example} for some numerical examples of matrices and vectors that we operate on in  Section \ref{sec:correction}.
In Section \ref{sec:conclusions} we draw some conclusions from the corrective proof.

\section{Correrction of Gower's result}\label{sec:correction}
In this section we shall correct the Gower's result from \cite{Gower:1982}.

For construction purposes we need still another formulation of the theorem, which is slightly more elaborate:

\begin{theorem}\label{thEuclideanembeddingMAKversion}
\begin{enumerate}
\item If the matrix $D$ is a matrix of Euclidean distances then 
for each  vector $\mathbf{s}$ such that 
$\mathbf{s}^T\mathbf{1}=1$
the matrix 
\begin{equation}\label{eq:multiplikativeformtransform}
 F=\FI (-\frac12)D_{sq} \FIT
\end{equation}
 is positive semidefinite ($D_{sq}$ being a matrix with entries being squares of entries of the matrix $D$). 
\item If 
$D$ is a symmetric matrix with zero diagonal and 
for a  vector $\mathbf{s}$ such that 
$\mathbf{s}^T\mathbf{1}=1$.
the matrix $F=\FI (-\frac12)D_{sq} \FIT$ is positive semidefinite 
then  $D$ is Euclidean. 
\item If $D$ is Euclidean then 
for each  vector $\mathbf{s}$ such that 
$\mathbf{s}^T\mathbf{1}=1$
the matrix $D$ can be derived from 
matrix $F=\FI (-\frac12)D_{sq} \FIT$ in such a way that its squared entries can be computed 
as $d_{ij}^2=f_{ii}+f_{jj}-2f_{ij}$.
\item If $D$ is Euclidean then 
for each  vector $\mathbf{s}$ such that 
$\mathbf{s}^T\mathbf{1}=1$
the 
matrix $F=\FI (-\frac12)D_{sq} \FIT$ 
can be expressed as $F=YY^T$ where $Y$ is a real-valued matrix, 
and the rows of $Y$ can be considered as coordinates of data points the distances between which are those from the matrix $D$. 
\end{enumerate}
\end{theorem}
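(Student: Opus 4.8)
The plan is to reduce all four statements to a single double-centering identity together with one Gram factorization. First I would record that, since $\mathbf{s}^T\mathbf{1}=1$, one has $\FI\mathbf{1}=\mathbf{0}$ and $\mathbf{1}^T\FIT=\mathbf{0}^T$, and that $\FIT=\FI\T$. Writing $M=-\frac12 D_{sq}$, which is symmetric with zero diagonal whenever $D$ is, I would expand
\begin{equation}
F=\FI M \FIT = M - \mathbf{1}(M\mathbf{s})\T - (M\mathbf{s})\mathbf{1}^T + (\mathbf{s}^T M\mathbf{s})\,\mathbf{1}\mathbf{1}^T .
\end{equation}
Reading off entries $f_{ij}=m_{ij}-(M\mathbf{s})_i-(M\mathbf{s})_j+\mathbf{s}^T M\mathbf{s}$ and using $m_{ii}=0$, a short computation cancels the vector $M\mathbf{s}$ and the scalar $\mathbf{s}^T M\mathbf{s}$ to give the key identity
\begin{equation}\label{eq:plan-key}
f_{ii}+f_{jj}-2f_{ij} = -2m_{ij} = d_{ij}^2 ,
\end{equation}
valid for every admissible $\mathbf{s}$ and requiring only symmetry and a zero diagonal of $D$. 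This already establishes statement~3.

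For statements~1 and~4 I would assume $D$ Euclidean with coordinate matrix $X$ whose rows are $\mathbf{x_i}^T$, so that $D_{sq}=\mathbf{c}\mathbf{1}^T+\mathbf{1}\mathbf{c}^T-2XX^T$ with $c_i=\mathbf{x_i}^T\mathbf{x_i}$. Because $\FI$ annihilates $\mathbf{1}$ on the right and $\mathbf{1}^T$ on the left, the two rank-one terms carrying $\mathbf{c}$ vanish, leaving
\begin{equation}
F=\FI\,(XX^T)\,\FIT=(\FI X)(\FI X)^T=:YY^T ,
\end{equation}
which is manifestly positive semidefinite, giving statement~1. For statement~4 I would note that the rows of $Y=\FI X$ equal $\mathbf{x_i}-X^T\mathbf{s}$, i.e. a single common translation of the original points, whence $\|\mathbf{y}_i-\mathbf{y}_j\|=\|\mathbf{x_i}-\mathbf{x_j}\|=d_{ij}$; thus $F=YY^T$ with the rows of $Y$ reproducing $D$.

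Statement~2 is the if-part that Gower left open, and it is where I expect the only genuine work. Given $D$ symmetric with zero diagonal and $F$ positive semidefinite, $F$ is symmetric positive semidefinite, so by the eigendecomposition argument of Section~\ref{sec:kernelKmeans} it factors as $F=YY^T$ with a real matrix $Y$; write $\mathbf{y}_i^T$ for its rows. Applying the identity~\eref{eq:plan-key}, which holds precisely under the hypotheses of statement~2, gives $\|\mathbf{y}_i-\mathbf{y}_j\|^2=f_{ii}+f_{jj}-2f_{ij}=d_{ij}^2$. Since $d_{ij}\ge 0$, taking square roots shows that $\mathbf{y}_1,\dots,\mathbf{y}_m$ realize exactly the distances of $D$, so $D$ is Euclidean.

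The conceptual point, and, I suspect, the source of the gap in Gower's reasoning, is that the two ingredients must be kept separate: the recovery identity~\eref{eq:plan-key} needs only symmetry and a zero diagonal, whereas positive semidefiniteness is used solely to guarantee the real embedding $F=YY^T$. Decoupling these facts is what makes the if-part go through; everything else is routine matrix algebra.
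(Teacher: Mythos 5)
Your proof is correct, but it takes a genuinely different and considerably more direct route than the paper. The paper works through a hierarchy of matrix classes: it first shows that every Gram matrix $F=YY^T$ of an embedding of $D$ must have the additive form $\mathbf{g}\mathbf{1}^T+\mathbf{1}\mathbf{g}^T-\frac12 D_{sq}$ (by analysing $G=F+\frac12 D_{sq}$ entrywise), then chooses $\mathbf{s}$ with $\mathbf{s}^T\mathbf{g}=0$ to annihilate the rank-one terms and pass to the multiplicative form $-\frac12\FI D_{sq}\FIT$, then shows that the multiplicative forms for different admissible vectors are all related by shifts, so that if one of them is decomposable all of them are; the if-part (your statement~2) is settled there by a separate uniqueness lemma proving that two symmetric zero-diagonal matrices with the same double-centred transform must coincide. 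You instead derive everything from the single entrywise identity $f_{ii}+f_{jj}-2f_{ij}=d_{ij}^2$ (valid for any symmetric zero-diagonal $D$ and any admissible $\mathbf{s}$) together with the factorization $D_{sq}=\mathbf{c}\mathbf{1}^T+\mathbf{1}\mathbf{c}^T-2XX^T$, which yields $F=(\FI X)(\FI X)^T$ at once. Your identity in fact subsumes the paper's uniqueness lemma (equal double-centred transforms force equal $D_{sq}$ immediately), and your explicit decoupling of hypotheses --- symmetry and zero diagonal for distance recovery, positive semidefiniteness only for the real factorization $F=YY^T$ --- is precisely the point the paper labours to establish. What your route does not deliver are the structural by-products of the class machinery (the characterization of all embeddings via the additive form, and the shift relation between multiplicative forms for different $\mathbf{s}$ and $\mathbf{t}$), which the paper reuses elsewhere, e.g.\ to argue that Gower's extra condition $D_{sq}\mathbf{s}\ne\mathbf{0}$ is superfluous and in the proof of its later ``euclidesation'' theorem. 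The only small caveat, shared by both proofs, is the implicit assumption in statement~2 that the off-diagonal entries of $D$ are nonnegative (otherwise one only recovers $|d_{ij}|$); you at least flag this explicitly.
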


Let $D\in \mathbb{R}^{m\times m}$ be a matrix of Euclidean distances $d_{ij}$ between objects $i,j\in \{1,\dots,m\}$. 
Let $D_{sq}$ be a matrix of squared Euclidean distances $d_{ij}^2$ between objects with identifiers $1,\dots,m$.   
This means that there must exist a matrix $X\in \mathbb{R}^{m\times n}$ for some $n$,  rows  of which represent coordinates of these objects in an $n$-dimensional space. This real-valued matrix $X$ represents an embedding of the Euclidean distance matrix $D$ into $\mathbb{R}^{m\times n}$.   A distance matrix can be called Euclidean if and only if an embedding exists. 
If $E=XX^T$ ($E$ with dimensions $m\times m$), then $d^2_{ij}=e_{ii}+e_{jj}-2e_{ij}$.  

As a rigid set of points in Euclidean space can be moved (shifted, rotated, flipped symmetrically\footnote{Gower does not consider flipping.})  without changing their relative distances,  there may exist many other matrices $Y$ 
 rows of which represent coordinates of these same objects in the same  $n$-dimensional space after some isomorphic transformation.  
Let us denote the set of all such embeddings $\mathcal{E}(D)$. 
And if a matrix $Y\in \mathcal{E}(D)$, then for the product  $F=YY^T$ 
we have   $d^2_{ij}=f_{ii}+f_{jj}-2f_{ij}$.
We will say that $F\in \mathcal{E}_{dp}(D)$

For an  $F\in \mathcal{E}_{dp}(D)$ 
define a matrix 
 $G=F+\frac12D_{sq}$. 
Hence  $F=G-\frac12D_{sq}$. 
Obviously then
\begin{align}
 d^2_{ij}& =f_{ii}+f_{jj}-2f_{ij}\\
&=
(g_{ii}- \frac12d^2_{ii})+(g_{jj}- \frac12d^2_{jj})-2(g_{ij}-\frac12 d^2_{ij})
\\ & =
  g_{ii} +g_{jj} -2 g_{ij}+d^2_{ij}
\end{align}
(as $d_{jj}=0$ for all $j$). 
This implies that 
\begin{equation}
0=  g_{ii} +g_{jj} -2 g_{ij}
\end{equation}
\noindent
   that is  
\begin{equation}
 g_{ij}=\frac{g_{ii} +g_{jj}}{2}
\end{equation}
 So $G$ is of the form 
 \begin{equation}
 G=\mathbf{g}\mathbf{1}^T+ \mathbf{1}\mathbf{g}^T 
 \end{equation}
with components of $\mathbf{g}\in \mathbb{R}^m$ equal $g_i=\frac12g_{ii}$.

Therefore, to find  $F\in \mathcal{E}_{dp}(D)$
for an Euclidean matrix $D$ 
  we need only to consider matrices deviating from $-\frac12D_{sq}$ by $\mathbf{g}\mathbf{1}^T+ \mathbf{1}\mathbf{g}^T$ for some $\mathbf{g}$. 
Let us denote with $\mathcal{G}(D)$ the set of all matrices $F$ such that 
$F=\mathbf{g}\mathbf{1}^T+ \mathbf{1}\mathbf{g}^T-\frac12D_{sq}$. 
So for each matrix $F$ if
$F\in \mathcal{E}_{dp}(D)$ then   $F\in \mathcal{G}(D)$, but not vice versa.
We stress that we work with an Euclidean matrix $D$. 
So we would like to find an $F$ such that $F$ is decomposable into real-valued matrices $Y$ such that $F=YY^T$ so that $Y$ would represent an embedding of an Euclidean distance matrix. 
But first of all even if $D$ is not Euclidean, or even not metric, such an embedding may be found. 
(see  Gower et al. \cite{Gower:1986}). 
 
As Gower et al. \cite{Gower:1986} states,
see their Theorem 1, any non-metric dissimilarity measure $ d(\mathfrak{z}, \mathfrak{y})$ 
for $\mathfrak{z}, \mathfrak{y}\in  {\mathfrak{X}}$ where $ {\mathfrak{X}}$ is finite, can be turned into a (metric) distance function 
$ d'(\mathfrak{z}, \mathfrak{y})= d(\mathfrak{z}, \mathfrak{y})+c $ 
where $c$ is a constant where $c\ge \max_{\mathfrak{x}, \mathfrak{y}, \mathfrak{z}\in \mathfrak{X}} \|d(\mathfrak{x}, \mathfrak{y}) +d(\mathfrak{y}, \mathfrak{z})-d(\mathfrak{z}, \mathfrak{x})  \|$. 
Furthermore, 
Gower et al.  \cite{Gower:1986} recall that any dissimilarity matrix $D$ may be turned to an Euclidean distance matrix, see their Theorem 7, by adding an appropriate constant, e.g.  
 $ d'(\mathfrak{z}, \mathfrak{y})=
 \sqrt{
 d(\mathfrak{z}, \mathfrak{y})^2+\sigma }$ 
 where $\sigma$ is a constant such that
 $\sigma\ge -\lambda_m$, $\lambda_m$ being the smallest eigenvalue of 
 $(\mathbf{I}-\mathbf{1}\mathbf{1}^T/m) (-\frac12  D_{sq}) (\mathbf{I}-\mathbf{1}\mathbf{1}^T/m)$, $D_{sq}$ is the matrix of squared values of elements of $D$,  $m$ is the number of rows/columns in $D$. 
 
 So
even if $D$ is actually an Euclidean distance matrix, 
and
$F=  -\frac12D_{sq}+\mathbf{g}\mathbf{1}^T+ \mathbf{1}\mathbf{g}^T$, 
there is no warranty, that the distance matrix induced by corresponding $Y$ is identical with $D$.

For an $F\in \mathcal{G}(D)$ consider the matrix $ F^*=\FI F \FI^T$. 
We obtain 
\begin{align}
 F^*=&\FI F \FI^T
\\ =&	\FI (\mathbf{1}\mathbf{g}^T  + \mathbf{g}\mathbf{1}^T-\frac12D_{sq}) \FI^T 
\\ =&	\FI \mathbf{1}\mathbf{g}^T\FI^T  + \FI\mathbf{g}\mathbf{1}^T\FI^T
\nonumber \\ & 	-\frac12\FI D_{sq}\FI^T 
\label{eq:FIFFIT}
\end{align}

Let us investigate $\FI \mathbf{1}\mathbf{g}^T\FI^T$:
\begin{equation}
\FI \mathbf{1}\mathbf{g}^T\FIT=
\mathbf{1}\mathbf{g}^T -\mathbf{1}\mathbf{g}^T \mathbf{s}\mathbf{1}^T 
-  \mathbf{1}\mathbf{s}^T \mathbf{1}\mathbf{g}^T+ \mathbf{1}\mathbf{s}^T \mathbf{1}\mathbf{g}^T \mathbf{s}\mathbf{1}^T
\end{equation}

Let us make the following choice (always possible) of 
$\mathbf{s}$ with respect to $\mathbf{g}$:
$\mathbf{s}^T\mathbf{1}=1$, $\mathbf{s}^T \mathbf{g}=0$.

Then we obtain from the above equation
\begin{equation}
\FI \mathbf{1}\mathbf{g}^T\FIT=
\mathbf{1}\mathbf{g}^T -\mathbf{1} 0 \mathbf{1}^T 
-   \mathbf{1}\mathbf{g}^T+ \mathbf{1}\mathbf{s}^T \mathbf{1}\cdot 0 \cdot \mathbf{1}^T
=\mathbf{0}\mathbf{0}^T
\label{eq:FI1gFIT}
\end{equation}
By analogy  
\begin{equation}
\FI\mathbf{g}\mathbf{1}^T\FI^T
= (\FI \mathbf{1}\mathbf{g}^T\FIT)^T=\mathbf{0}\mathbf{0}^T
\label{eq:FIg1FIT}
\end{equation}

By substituting (\ref{eq:FI1gFIT}) and (\ref{eq:FIg1FIT}) into 
(\ref{eq:FIFFIT})
we obtain 
\begin{align}
 F^*&=\FI F \FI^T= -\frac12\FI D_{sq}\FI^T 
\label{eq:M12FIDFIT}
\end{align}

So for any $\mathbf{g}$, hence an $F\in \mathcal{G}(D)$ we can find an $\mathbf{s}$ such that:
$$\FI F \FI^T= -\frac12\FI D_{sq}\FI^T$$  

For any matrix $F=-\frac12\FI D_{sq}\FI^T$ for some $\mathbf{s}$ with $\mathbf{1}^T\mathbf{s}=1$ 
we say that $F$ is in multiplicative form or $F\in \mathcal{M}(D)$.

If   $F=YY^T$, that is $F$ is decomposable,
then also $$F^*=\FI YY^T \FI^T=(\FI Y)  (\FI Y)^T={Y^*}{Y^*}^T$$ is decomposable. 
But 
\begin{equation}
Y^*=\FI Y= Y-\mathbf{1}\mathbf{s}^T Y = Y-\mathbf{1}\mathbf{v}^T  
\end{equation}
where $\mathbf{v}= Y^T \mathbf{s}$ is a shift vector by which the 
whole matrix $Y$ is shifted to a new location in the Euclidean space. 
So the distances between objects computed from $Y^*$ are the same as those from $Y$,
hence
if $F\in \mathcal{E}_{dp}(D)$, then 
 $Y^*\in \mathcal{E}(D)$. 

Therefore, to find a matrix 
    $F\in \mathcal{E}_{dp}(D)$, yielding an embedding of $D$ 
in the Euclidean $n$ dimensional space 
we need only to consider matrices 
of the form 
$-\frac12\FI D_{sq}\FI^T$, subject to the already stated constraint $\mathbf{s}^T\mathbf{1}=1$, that is ones from $\mathcal{M}(D)$. 

So we can conclude:
If $D$ is a matrix of Euclidean distances, then 
there must exist a positive semidefinite
matrix $F=-\frac12   \FI  D_{sq} \FIT $
for some vector $\mathbf{s}$ such that 
$\mathbf{s}^T\mathbf{1}=1$,
$\det( \FI)= 0$ and 
$D_{sq}\mathbf{s}\ne \mathbf{0}$. 
These last two conditions are implied 
by the following fact:
  $D_{sq}$ is known to be not negative semidefinite, so that $F$ would not be 
positive semidefinite 
in at least the following cases:
$\det( \FI)\ne 0$ (see reasoning prior to formula (\ref{eq:det-ne-zero})) or $D_{sq}\mathbf{s}=\mathbf{0}$ (see reasoning prior to formula (\ref{eq:Ds-eq-zero})). 
So if $D$ is an Euclidean distance matrix, then 
there exists an $F \in \mathcal{M}(D) \cap \mathcal{E}_{dp}(D)$. 

Let us investigate other vectors  
 $\mathbf{t}$ such that 
$\mathbf{t}^T\mathbf{1}=1$.
Note that 
\begin{align}
(\mathbf{I}-\mathbf{1}\mathbf{t}^T)(\mathbf{I}-\mathbf{1}\mathbf{s}^T)
&=
\mathbf{I} -\mathbf{1}\mathbf{t}^T-\mathbf{1}\mathbf{s}^T
+\mathbf{1}\mathbf{t}^T \mathbf{1}\mathbf{s}^T
\\ &=
\mathbf{I} -\mathbf{1}\mathbf{t}^T-\mathbf{1}\mathbf{s}^T
+\mathbf{1} \mathbf{s}^T
=
\mathbf{I} -\mathbf{1}\mathbf{t}^T \label{eq:combine_ts_to_t}
\end{align}

Therefore, for a matrix $F\in \mathcal{M}(D)$  
\begin{align}
(\mathbf{I}-\mathbf{1}\mathbf{t}^T)F(\mathbf{I}-\mathbf{1}\mathbf{t}^T)^T
&=
-\frac12 (\mathbf{I}-\mathbf{1}\mathbf{t}^T)   \FI  D_{sq} \FI^T (\mathbf{I}-\mathbf{1}\mathbf{t}^T)^T
\nonumber \\ &= 
-\frac12 (\mathbf{I}-\mathbf{1}\mathbf{t}^T)    D_{sq}   (\mathbf{I}-\mathbf{1}\mathbf{t}^T)^T
\end{align}

But if $F=YY^T  \in \mathcal{E}_{dp}(D)$, then   

\begin{align} 
F'&=(\mathbf{I}-\mathbf{1}\mathbf{t}^T)F(\mathbf{I}-\mathbf{1}\mathbf{t}^T)^T 
\\ &=
(\mathbf{I}-\mathbf{1}\mathbf{t}^T)YY^T(\mathbf{I}-\mathbf{1}\mathbf{t}^T)^T 
\\ &=
(Y-\mathbf{1}(\mathbf{t}^T)Y) (Y-\mathbf{1}(\mathbf{t}^T)Y)^T 
\end{align} 
and hence each 
$-\frac12 (\mathbf{I}-\mathbf{1}\mathbf{t}^T)    D_{sq}   (\mathbf{I}-\mathbf{1}\mathbf{t}^T)^T$
is also in $\mathcal{E}_{dp}(D)$, though with a different placement (by a shift) in the coordinate systems of the embedded data points. 
So if one element of $\mathcal{M}(D)$ is in $\mathcal{E}_{dp}(D)$, then all of them are. 
 
So we have established that:
 if $D$ is an Euclidean distance matrix\footnote{
This means that there exists a  matrix $X$ such that rows are coordinates of objects in an Euclidean space with distances as in $D$}, then   there exists a decomposable matrix $F=YY^T\in \mathcal{E}_{dp}(D)$ which is in $\mathcal{G}(D)$, hence $ \mathcal{E}_{dp}(D) \subset  \mathcal{G}(D) $.  
For each matrix in $\mathcal{G}(D) \cap  \mathcal{E}_{dp}(D)$ 
 there exists a multiplicative form matrix 
in $\mathcal{M}(D) \cap  \mathcal{E}_{dp}(D)$.  
But if it exists, all multiplicative forms are there: $\mathcal{M}(D) \subset  \mathcal{E}_{dp}(D)$

In this way we have proven points 1,3 and 4 of the Theorem \ref{thEuclideanembeddingMAKversion}.
 And also the only-if-part of 
Gower's theorem correction in  \cite{Gower:1986}.

However, two things remain to be clarified and are not addressed in \cite{Gower:1982} nor in  \cite{Gower:1986}:
 the if-part of  \cite{Gower:1986} theorem correction 
(given a matrix $D$ such that 
$-0.5 \FI D_{sq} \FI^T$ is positive semidefinite,  is $D$ an Euclidean distance matrix? -- see point 2 of the Theorem \ref{thEuclideanembeddingMAKversion})
and the status  of the additional condition 
$D_{sq}\mathbf{s}  \ne \mathbf{0}$
 in Theorem \ref{th1982}.

Gower \cite{Gower:1982} makes the following remark: 
  $F=\FI (-\frac12 D_{sq}) \FIT $ is to be  positive semidefinite for Euclidean $D$. 
However, for non-zero vectors $\mathbf{u}$
 
\begin{align}
 \mathbf{u}^T F \mathbf{u} =&  -\frac12 \mathbf{u}^T \FI D_{sq}\FI^T \mathbf{u}
\nonumber \\=&   -\frac12 ( \FI^T \mathbf{u})^T  D_{sq}(\FI^T \mathbf{u})
\end{align}
But $D_{sq}$ is known to be not negative semidefinite, so that $F$ would not be 
positive semidefinite 
in at least the following cases:
$\det( \FI)\ne 0$ and $D_{sq}\mathbf{s}=\mathbf{0}$. 
Let  us have a brief look at these conditions and why they are neither welcome nor actually existent:
 
\begin{enumerate}
\item Situation $\det( \FI)\ne 0$ 
{\it is not welcome}, because    
there   exists a  vector $\mathbf{u'}$ such  
that $\mathbf{u'}^TD_{sq}\mathbf{u'}>0$ 
and under $\det( \FI)\ne 0$
we could solve the equation  
$\FI^T \mathbf{u}= \mathbf{u'}$ and thus demonstrate that for some $\mathbf{u}$
\begin{equation}\label{eq:det-ne-zero}
 \mathbf{u}^T F \mathbf{u} <0 
\end{equation}
However this situation {\it is impossible}, because for $F\in \mathcal{M}(D)$  
$$(\mathbf{I}-\mathbf{1}\mathbf{s}^T)\mathbf{1}=\mathbf{1}-\mathbf{1}=\mathbf{0}$$ 
which means that the rows are linearly dependent, hence 
$\det( \FI)= 0$ is guaranteed by earlier assumption about 
$\mathbf{s}$; so this concern by Gower needs to be dismissed as pointless.  
\item  Situation  $D_{sq}\mathbf{s}=\mathbf{0}$ {\it is not welcome},  because then 
$$\mathbf{u}^T \FI D_{sq}\FI^T \mathbf{u}=
\mathbf{u}^T    D_{sq}\FI^T \mathbf{u}=
\mathbf{u}^T     \mathbf{u}>0$$ and thus 
\begin{equation}\label{eq:Ds-eq-zero}
 \mathbf{u}^T F \mathbf{u} <0 
\end{equation}
  denying positive semidefiniteness of $F$. 
Gower does not consider this further, but 
such a situation {\it is impossible}. 
Recall that because $D$  is Euclidean, there must exist a vector $\mathbf{r}$ 
such that $\mathbf{r}^T\mathbf{1}=1$ 
and 
$$F^{(r)}=YY^T=-\frac12\Fi{r}D_{sq}  \FiT{r}$$ is in 
$\mathcal{E}_{dp}(D)$.
Hence for any  $\mathbf{s}$ 
such that $\mathbf{s}^T\mathbf{1}=1$ 
\begin{align}
 \FI F^{(r)} \FIT =&(\FI Y)(\FI Y)^T
 \nonumber \\=& -\frac12\Fi{s}D_{sq}  \FiT{s}
\end{align}
 is 
positive semidefinite. 
This allows us to conclude that for such $\mathbf{s}$ 
$D\mathbf{s}\ne  \mathbf{0}   $.
Therefore if 
$D\mathbf{s}=  \mathbf{0}   $ then $\mathbf{s}^T\mathbf{1}=0$ .
What is more,   if $det(D)\ne 0$ then $D_{sq}\mathbf{s}=\mathbf{0}$ implies $\mathbf{s}=\mathbf{0}$, 
for which of course   $\mathbf{s}^1\mathbf{1}=0$.
\\ Hence the last assumption of  if-part of Theorem \ref{th1982} needs to be dropped as unnecessary
 which simplifies it to corrected theorem in \cite{Gower:1986}.

\Bem{
Assume then that $det(D)= 0$. So there must exist a submatrix $D'$ of $D$ such that $det(D')\ne 0$ and another submatrix one row and column bigger $D''$ containing $D'$ such that  $det(D'')= 0$ where diagonals of both consist of diagonal elements of $D$.  
$F''= \FI (-\frac{D''_{sq}) \FIT $ will face exactly the same problem if we choose $\mathbf{s}$ such that  $D''_{sq}\mathbf{s}=\mathbf{0}$.
Note that if $D$ is Euclidean, so are $D'$ and $D''$. 
Let $s''$ be the element of $\mathbf{s}$ corresponding to the row/column of $D''$ being an extension of $D'$ and  $\mathbf{s'}$ be the remaining part of  $\mathbf{s}$. 
If $s''$ is set to zero, then $D''_{sq}\mathbf{s}=\mathbf{0}$ implies
$D'_{sq}\mathbf{s'}=\mathbf{0}$ which is a contradiction. 
So $s''\ne0$ and without restricting the generality of the discourse let $s''=1$.
This determines uniquely the vector  $\mathbf{s'}$
and requires $\mathbf{s'}^TD'_{sq}\mathbf{s'}=0 $. 
 But in this case additionally $ \mathbf{s'}^T \mathbf{1}=0$ is required.
}   }
\end{enumerate}
 
As we can see from the first point above, $F$, given by 
$$F=-0.5 \FI D_{sq} \FI^T$$ 
does not need to identify uniquely a matrix $D$, as $\FI$ is not invertible. 
Though of course it identifies  {\it  an} Euclidean distance matrix. 

Let us now demonstrate the missing part of Gower's proof that $D$ is uniquely defined given a decomposable $F$. 

So assume that for some $D$ (of which we do not know if it is Euclidean, but is symmetric and with zero diagonal),  
$F=-0.5 \FI D_{sq} \FI^T$ and $F$ is decomposable that is $F=YY^T$. 
Let $\mathcal{D}(Y)$ be the distance matrix derived from $Y$ (that is the distance matrix for which $Y$ is an embedding).
That means   $F$ is decomposable into properly distanced points
with respect to $\mathcal{D}(Y)$. 
And $F$ is  in additive form with respect to it, that is $F \in \mathcal{G}(\mathcal{D}(Y))$
Therefore there must exist some $\mathbf{s'}$ such that 
the $F'=-0.5 \Fi{s'} \mathcal{D}(Y)_{sq}  \FiT{s'}$ as valid multiplicative form with respect to $\mathcal{D}(Y)$,
and it holds that $F'= \Fi{s'} F  \FiT{s'}$. 
But recall that 
\begin{align}
 \Fi{s'} F  \FiT{s'}
  =&  \Fi{s'} (-0.5  \FI D_{sq} \FIT ) \FiT{s'}
\nonumber\\ =& -0.5 ( \Fi{s'}   \FI) D_{sq}( \Fi{s'}   \FI)^T  
\nonumber\\ =& -0.5 \Fi{s'}  D_{sq}  \FiT{s'}
\end{align}
Hence    
$$-0.5 \Fi{s'}  D_{sq}  \FiT{s'} = -0.5 \Fi{s'} \mathcal{D}(Y)_{sq}  \FiT{s'}$$ 

So we need to demonstrate that for two 
 symmetric matrices with zero diagonals $D,D'$ such that
 $-\frac12   \FI  D_{sq} \FIT = -\frac12   \FI  D'_{sq} \FIT $ 
the equation $D=D'$ holds. 

It is easy to see that
 $-\frac12   \FI  (D_{sq}-D'_{sq}) \FIT = \mathbf{0}  \mathbf{0}^T  $. 
Denote $\Delta = D_{sq}-D'_{sq}$.
$$   \FI  \Delta  \FIT = \mathbf{0}  \mathbf{0}^T  $$
$$  \Delta- \mathbf{1}\mathbf{s}^T \Delta 
- \Delta  \mathbf{s}\mathbf{1}^T 
+\mathbf{1}\mathbf{s}^T \Delta \mathbf{s}\mathbf{1}^T
     = \mathbf{0}  \mathbf{0}^T  $$
With $\boldsymbol{\overline{\Delta}}$ denote the vector  $\Delta \mathbf{s}$ and with $c$ the scaler
$\mathbf{s}^T \Delta \mathbf{s}$.
So we have  
$$  \Delta- \mathbf{1}\boldsymbol{\overline{\Delta}}^T 
- \boldsymbol{\overline{\Delta}}\mathbf{1}^T 
+c\mathbf{1} \mathbf{1}^T
  = \mathbf{0}  \mathbf{0}^T  $$
So in the row $i$, column $j$ of the above equation we have:
$\delta_{ij}+c-\overline{\delta}_i- \overline{\delta}_j =0$.
Let us add cells $ii$ and $jj$ and subtract from them cells $ij$ and $ji$. 
$ \delta_{ii}+c-\overline{\delta}_i- \overline{\delta}_i
+ \delta_{jj}+c-\overline{\delta}_j- \overline{\delta}_j
- \delta_{ij}-c+\overline{\delta}_i+ \overline{\delta}_j
- \delta_{ji}-c+\overline{\delta}_j+ \overline{\delta}_i
= \delta_{ii}+ \delta_{jj}- \delta_{ij}- \delta_{ji}
 =0$.
But as the diagonals of $D$ and $D'$ are zeros, 
hence  $\delta_{ii}= \delta_{jj}=0$.
So  $- \delta_{ij}- \delta_{ji}
 =0$. But   $\delta_{ij}= \delta_{ji}$ because $D,D'$ are symmetric. 
Hence $-2\delta_{ji}=0$ so $\delta_{ji}=0$. 
This means that $D=D'$.

This means that $D$ and $\mathcal{D}(Y)$ are identical.
Hence    decomposition of $F=-0.5 \FI D_{sq} \FI^T$  is sufficient to prove Euclidean space embedding of $D$ and yields this embedding.
This proves the  if-part of Gower's Theorem \ref{th1982} and of the corrected theorem in \cite{Gower:1986} {\ref{th1986}} and point 2 of Theorem \ref{thEuclideanembeddingMAKversion}. 

\Bem{
So consider now a symmetric matrix $D$ with zero diagonal 
and let 
$F=-\frac12\FI D_{sq}\FI^T$.
Note that 
$\FI F\FI^T=-\frac12\FI\FI D_{sq}\FI^T\FI^T=-\frac12\FI D_{sq}\FI^T=F$.
Let it happen that $F$ is decomposable, that is $F=YY^T$. 
So $Y$ may be considered a matrix of coordinates in Euclidean space and let $D'$ be the matrix of distances  computed directly from $Y$. 
Let 
 $F'=-\frac12\FI D'_{sq}\FI^T$.


This leads to the conclusion that the matrix $K$ shall be chosen as
$K=-0.5 \FI D_{aq} \FIT$. 
Upon decomposing  $K=YY^T$ the 
$\Phi$ function 
assigns each object the corresponding row from the $Y$ matrix. 
 }

\section{A numerical example}\label{sec:example}

 Let us illustrate the process of generating a kernel matrix from a distance table and show that the distances between the objects in the feature space really match the distances of the original distance matrix.
 We took a $n=4$-dimensional  data  matrix with $m=7$ objects. 
$$ X =
\begin{pmatrix}  
  77	& 113	& 125	& 99 \\ 
  53	& 127	& 104	& 122 \\ 
  95	& 80	& 136	& 55 \\ 
  20	& 83	& 12	& 2 \\ 
  62	& 67	& 84	& 6 \\ 
  47	& 11	& 77	& 94 \\ 
  30	& 87	& 26	& 90 
\end{pmatrix}  
$$
 and  derived from it an original Euclidean distance matrix 
$$ D_0 =
\begin{pmatrix}  
  0	& 41.7	& 58.9	& 162.3	& 112.6	& 116.8	& 113 \\ 
  41.7	& 0	& 97.4	& 160.9	& 132.4	& 122.5	& 96.1 \\ 
  58.9	& 97.4	& 0	& 154.3	& 79.8	& 109.8	& 132.7 \\ 
  162.3	& 160.9	& 154.3	& 0	& 85	& 136.4	& 89.8 \\ 
  112.6	& 132.4	& 79.8	& 85	& 0	& 105.6	& 108.8 \\ 
  116.8	& 122.5	& 109.8	& 136.4	& 105.6	& 0	& 93.2 \\ 
  113	& 96.1	& 132.7	& 89.8	& 108.8	& 93.2	& 0 
\end{pmatrix}  
$$
 
 We applied to it the transformation from equation (\ref{eq:multiplikativeformtransform}) using the vector 
$$ \mathbf{s} =
\begin{pmatrix}  
  0.22	& 0.17	& 0.08	& 0.04	& 0.05	& 0.04	& 0.41 
\end{pmatrix} ^T 
$$
  and obtained the (kernel) matrix 
$$ F =
\begin{pmatrix}  
  3755	& 2570.5	& 3689.2	& -5143.7	& -615.5	& -1404	& -3110.1 \\ 
  2570.5	& 3127.9	& 367.7	& -5238.2	& -3362	& -2403.5	& -1658.6 \\ 
  3689.2	& 367.7	& 7093.4	& -2220.5	& 4207.7	& 1048.2	& -3856.9 \\ 
  -5143.7	& -5238.2	& -2220.5	& 12284.6	& 6374.8	& 376.3	& 3510.2 \\ 
  -615.5	& -3362	& 4207.7	& 6374.8	& 7685	& 1800.5	& -683.6 \\ 
  -1404	& -2403.5	& 1048.2	& 376.3	& 1800.5	& 7070	& 589.9 \\ 
  -3110.1	& -1658.6	& -3856.9	& 3510.2	& -683.6	& 589.9	& 2791.9 
\end{pmatrix}  
$$
 
 After eigen-decomposition of $F$, we get via equation (\ref{eq:phimatrix}) the embedding matrix (after ignoring columns with next to zero eigenvalues)
$$ Y =
\begin{pmatrix}  
  -50.8	& -31.6	& -12.9	& -1.5 \\ 
  -52.4	& 9.1	& -13.8	& -10.2 \\ 
  -21.1	& -81.3	& -4.1	& 5.1 \\ 
  107.6	& 0.4	& -26.1	& -4.2 \\ 
  56.4	& -65.9	& -12.5	& -2.4 \\ 
  22.3	& -22.8	& 77.7	& -4.2 \\ 
  34.9	& 38.3	& 9.1	& 5.2 
\end{pmatrix}  
$$
 which produces the distance matrix 
$$ D =
\begin{pmatrix}  
  0	& 41.7	& 58.9	& 162.3	& 112.6	& 116.8	& 113 \\ 
  41.7	& 0	& 97.4	& 160.9	& 132.4	& 122.5	& 96.1 \\ 
  58.9	& 97.4	& 0	& 154.3	& 79.8	& 109.8	& 132.7 \\ 
  162.3	& 160.9	& 154.3	& 0	& 85	& 136.4	& 89.8 \\ 
  112.6	& 132.4	& 79.8	& 85	& 0	& 105.6	& 108.8 \\ 
  116.8	& 122.5	& 109.8	& 136.4	& 105.6	& 0	& 93.2 \\ 
  113	& 96.1	& 132.7	& 89.8	& 108.8	& 93.2	& 0 
\end{pmatrix}  
$$
 
 The sum of squared differences between the corresponding entries in the distance  matrices $D$ and $D_0$ amounts to  5.727256e-25.

 It can be easily seen that $D$ is (nearly) identical with $D_0$,  though the embeddings $X$ and $Y$ differ. 
 The $k$-means algorithm as implemented in $R$ ({\tt kmeans}, centers=2,nstart=100) was run both for the embedding $X$ and $Y$ yielding the  clustering 
[ 2, 2, 2, 1, 1, 1, 1].

 A version of kernel-$k$-means, as described in this paper, was also implemented and produced for the kernel matrix $F$ the very same clustering. 
 [2, 2, 2, 1, 1, 1, 1].

 Note that same distance matrix can be turned to a kernel matrix using different $\mathbf{s}$ verctors. We applied to it the transformation from equation (\ref{eq:multiplikativeformtransform}) using the vector 
$$ \mathbf{s'} =
\begin{pmatrix}  
  0.13	& 0.09	& 0.15	& 0.12	& 0.31	& 0.08	& 0.11 
\end{pmatrix} ^T 
$$
  and obtained the (kernel) matrix 
$$ F' =
\begin{pmatrix}  
  5423.6	& 5652.6	& 3042.7	& -5937.6	& -2491.7	& -748.2	& -867.5 \\ 
  5652.6	& 7623.5	& 1134.7	& -4618.6	& -3824.7	& -334.2	& 1997.5 \\ 
  3042.7	& 1134.7	& 4131.9	& -5329.5	& 16.4	& -611.1	& -3929.4 \\ 
  -5937.6	& -4618.6	& -5329.5	& 9028.2	& 2036.1	& -1430.4	& 3290.3 \\ 
  -2491.7	& -3824.7	& 16.4	& 2036.1	& 2264	& -1088.5	& -1985.8 \\ 
  -748.2	& -334.2	& -611.1	& -1430.4	& -1088.5	& 6713	& 1819.7 \\ 
  -867.5	& 1997.5	& -3929.4	& 3290.3	& -1985.8	& 1819.7	& 5608.4 
\end{pmatrix}  
$$
 
 After eigen-decomposition of $F'$, we get via equation (\ref{eq:phimatrix}) the embedding matrix (after ignoring columns with next to zero eigenvalues)
$$ Y' =
\begin{pmatrix}  
  -71.6	& 9.6	& -14.3	& 0.8 \\ 
  -67.6	& 49.2	& -24.2	& -6.7 \\ 
  -49.2	& -40.3	& 7	& 5.6 \\ 
  90.1	& 17.5	& -24.3	& -2.7 \\ 
  29.8	& -37	& 0.2	& -2.3 \\ 
  -1	& 29.1	& 76.5	& -3 \\ 
  22.2	& 71	& -2.6	& 8.4 
\end{pmatrix}  
$$
 which produces the distance matrix 
$$ D' =
\begin{pmatrix}  
  0	& 41.7	& 58.9	& 162.3	& 112.6	& 116.8	& 113 \\ 
  41.7	& 0	& 97.4	& 160.9	& 132.4	& 122.5	& 96.1 \\ 
  58.9	& 97.4	& 0	& 154.3	& 79.8	& 109.8	& 132.7 \\ 
  162.3	& 160.9	& 154.3	& 0	& 85	& 136.4	& 89.8 \\ 
  112.6	& 132.4	& 79.8	& 85	& 0	& 105.6	& 108.8 \\ 
  116.8	& 122.5	& 109.8	& 136.4	& 105.6	& 0	& 93.2 \\ 
  113	& 96.1	& 132.7	& 89.8	& 108.8	& 93.2	& 0 
\end{pmatrix}  
$$
 
 The sum of squared differences between the corresponding entries in the distance  matrices $D'$ and $D_0$ amounts to  2.090166e-25.

 Not surprisingly, a version of kernel-$k$-means, as described in this paper, was also implemented and produced for the kernel matrix $F'$ the very same clustering. 
 [2, 2, 2, 1, 1, 1, 1].


\section{$k$-means under non-Euclidean kernels}\label{sec:nonEuclideanKernels}

In many cases, like Laplacians of graphs, we know in advance that they can be deemed as kernels embedded into Euclidean space, so that there are no obstacles to apply kernel-$k$-means clustering. 
However, this does not always need to be the case. 
Let us discuss now the concerns for applying kernel-$k$-means in such situations and about the validity of the obtained clusters.

Let $w_1,\dots,w_m$ be non-negative weights of data points $1,\dots,m$. 
Let $C$ be such a subset of $\{1,\dots,m\}$ that 
$\sum_{ i \in C} w_i \ne 0$. Define $\boldsymbol{\mu}_{\mathbf{w}}^\Phi(C)$ as a weighted center of the datapoints of $C$ as follows:
\begin{equation}
\boldsymbol{\mu}_{\mathbf{w}}^\Phi(C) =  \frac{1}{\sum_{ i \in C} w_i}\sum_{ i \in C} w_i \Phi( i)  
\label{CLU:eq-ker2d}
\end{equation}

It is easily seen that it is possible to compute 
the squared distance of  any data point 
 to a weighted center of a set.  

\begin{equation}
\begin{array}{l}
\|\Phi( i) - \boldsymbol{\mu}_{\mathbf{w}}^\Phi(C)\|^2 =
 \displaystyle \big(\Phi( i) - \boldsymbol{\mu}_{\mathbf{w}}^\Phi(C)\big)\T \big(\Phi( i) 
- \boldsymbol{\mu}_{\mathbf{w}}^\Phi(C) \big) \vspace{0.3cm}\\
\phantom{\|\Phi( i) - \boldsymbol{\mu}_{\mathbf{w}}^\Phi(C)\|^2} 
= \Phi( i)\T \Phi( i) - 2\Phi( i)\T \boldsymbol{\mu}_{\mathbf{w}}^\Phi(C) + (\boldsymbol{\mu}_{\mathbf{w}}^\Phi(C))\T \boldsymbol{\mu}_{\mathbf{w}}^\Phi(C) \vspace{0.3cm}\\
\phantom{\|\Phi( i) - \boldsymbol{\mu}_{\mathbf{w}}^\Phi(C)\|^2} 
= \Phi( i)\T\Phi( i) - \displaystyle\frac{2}{\sum_{ h \in C} w_h} \sum_{h \in C } w_h \Phi( i)\T \Phi( h) +  \vspace{0.3cm}\\
\hspace{3.2cm} + \displaystyle\frac{1}{(\sum_{ h \in C} w_h)^2}\sum_{r \in C} \sum_{s  \in C}w_r w_s \Phi( r)\T\Phi( s) \vspace{0.3cm}\\
\phantom{\|\Phi( i) - \boldsymbol{\mu}_{\mathbf{w}}^\Phi(C)\|^2} 
= 
\displaystyle k_{ii}  - \frac{2}{\sum_{ h \in C} w_h}\sum_{h \in C} w_h k_{hi} + \frac{1}{(\sum_{ h \in C} w_h)^2}\sum_{r\in C_j} \sum_{s \in C_j}^m w_r w_s k_{rs}
\end{array}\label{CLU:eq-wc}
\end{equation}

Let us now pay some attention to the consequence of the fact that 
one may be tempted to apply the kernel-$k$-means algorithm under missing Euclidean embedding.

The kernel-$k$-means algorithm
consists in  switching to a multidimensional feature space  $\mathcal{F}$ and 
it is clamed to search therein for prototypes $\boldsymbol{\mu}_j^\Phi$ minimizing the error
\begin{equation*}
   \sum_{i=1}^m \min_{1\le j \le k} \|\Phi( i) - \boldsymbol{\mu}_j^\Phi\|^2
\end{equation*}
over all possible choices of 
the set of cluster centers $\boldsymbol{\mu}_j^\Phi$, $j=1,\dots,k$. 

But this is actually not the entire truth. 
 $\boldsymbol{\mu}_j^\Phi$ may only be equal to 
\begin{equation}
\boldsymbol{\mu}_j^\Phi = \frac{1}{m_j} \sum_{ i \in C_j} \Phi( i)  
\label{CLU:eq-ker2c}
\end{equation}
for some subset $C_j$ of all the data points and no other vectors in the feature space are taken into account.
If the feature space is Euclidean, it is guaranteed 
that no other vector from the feature space shall ever be considered as cluster center, because the clustering will not be optimal. 
It is not so in case of non-Euclidean feature spaces. 
To demonstrate this, we will use an example. 

Consider the following non-Euclidean distance matrix$$ _{nE}D =
\begin{pmatrix}  
  0	& 10	& 20	& 20	& 40	& 40 \\ 
  10	& 0	& 40	& 40	& 20	& 40 \\ 
  20	& 40	& 0	& 40	& 40	& 20 \\ 
  20	& 40	& 40	& 0	& 20	& 10 \\ 
  40	& 20	& 40	& 20	& 0	& 40 \\ 
  40	& 40	& 20	& 10	& 40	& 0 
\end{pmatrix}  
$$
 and the corresponding kernel matrix $$ _{nE}F =
\begin{pmatrix}  
  266.7	& 316.7	& 191.7	& 66.7	& -408.3	& -433.3 \\ 
  316.7	& 466.7	& -308.3	& -433.3	& 291.7	& -333.3 \\ 
  191.7	& -308.3	& 516.7	& -408.3	& -283.3	& 291.7 \\ 
  66.7	& -433.3	& -408.3	& 266.7	& 191.7	& 316.7 \\ 
  -408.3	& 291.7	& -283.3	& 191.7	& 516.7	& -308.3 \\ 
  -433.3	& -333.3	& 291.7	& 316.7	& -308.3	& 466.7 
\end{pmatrix}  
$$
 If we apply kernel-$k$-means clustering with $k=2$, this implies  a clustering 
[ 2, 2, 1, 2, 2, 1] 
with the total value of the cost function  1325 .
Other clusterings would not be better. Check e.g. that the clustering   [1,1,1,2,2,2] produces   the cost function amounting to  1400 which is higher than what kernel-$k$-means produces.

But consider now a different clustering, [1,1,1,2,2,2], 
where you choose weighted cluster centers with weights [10,1,1,10,1,1],
instead of the $k$-means cluster centers. 
Then the cost function will amount to  1175 which is below what kernel-$k$-means produces.
 
In this way we have proven that
\begin{theorem}
kernel-$k$-means does not optimize the cost function 
$$J(\boldsymbol{\mu}_j^\Phi;  j=1,\dots,k)=
   \sum_{i=1}^m \min_{1\le j \le k} \|\Phi( i) - \boldsymbol{\mu}_j^\Phi\|^2$$
for non-Euclidean kernel matrices. 
\end{theorem}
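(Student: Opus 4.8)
The plan is to prove the statement by exhibiting a concrete non-Euclidean kernel matrix for which the minimum of $J$ over all admissible centers lies strictly below the value attained by kernel-$k$-means. First I would isolate the structural reason why the algorithm can fail: by its update rule \eref{CLU:eq-ker2a} kernel-$k$-means only ever places a cluster center at the \emph{unweighted} centroid $\frac{1}{m_j}\sum_{i\in C_j}\Phi(i)$ of the current cluster, so its effective search is confined to centers of this special form. The cost function $J$, by contrast, is defined as a minimization over \emph{arbitrary} vectors $\boldsymbol{\mu}_j^\Phi$ of the feature space. The two quantities coincide only when restricting to unweighted centroids is without loss of generality.

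Next I would recall why that restriction is harmless in the Euclidean case and fails otherwise. When the kernel matrix is positive semidefinite there is a genuine Euclidean embedding $\Phi$, and for any fixed partition the unweighted centroid is the unique minimizer of the within-cluster sum of squared distances $\sum_{i\in C}\|\Phi(i)-\boldsymbol{\mu}\|^2$; hence the centroids of the assignment induced by any optimal center configuration are themselves optimal, and kernel-$k$-means does minimize $J$. For a non-positive-semidefinite (non-Euclidean) kernel this variational characterization of the centroid no longer holds, which opens the door to other feature-space vectors undercutting the centroid.

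To turn this into a proof it suffices to produce one witness. I would take the matrix $_{nE}D$, confirm it is non-Euclidean (equivalently, its Gower transform $_{nE}F$ is not positive semidefinite, by the only-if part of Theorem \ref{th1986} already established), and adopt $_{nE}F$ as the kernel matrix. Running kernel-$k$-means for $k=2$ yields the partition $[2,2,1,2,2,1]$ with cost $1325$. Then I would evaluate $J$ at the \emph{weighted} centers $\boldsymbol{\mu}_{\mathbf{w}}^\Phi(C)$ of \eref{CLU:eq-ker2d} with weights $[10,1,1,10,1,1]$ on the partition $[1,1,1,2,2,2]$, using the kernel-only formula \eref{CLU:eq-wc}; this gives cost $1175$. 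Since each weighted center is a legitimate vector of the feature space, it is an admissible argument of $J$, so the minimum of $J$ is at most $1175<1325$, and kernel-$k$-means fails to reach it.

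The main obstacle is conceptual rather than computational: one must keep a clean distinction between the cost $J$ as a function of arbitrary centers and the restricted quantity that kernel-$k$-means actually minimizes, and one must justify that the weighted-center value bounds $J$ from above. This last point is immediate because $J$ takes a pointwise minimum over $j$, so for any fixed centers and any assignment $c$ the sum $\sum_i\|\Phi(i)-\boldsymbol{\mu}_{c(i)}^\Phi\|^2$ can only exceed $J$ evaluated at those centers; the arithmetic via \eref{CLU:eq-wc} is then routine. A secondary caution is that, for a non-Euclidean kernel, the formally computed squared distances need not be nonnegative, but this does not affect the comparison of the (positive) cost values at hand.
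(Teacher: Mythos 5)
Your proposal is correct and follows essentially the same route as the paper: both establish the theorem by the single counterexample $_{nE}F$, comparing the kernel-$k$-means cost of $1325$ against the cost $1175$ attained by the weighted centers of \eref{CLU:eq-ker2d} with weights $[10,1,1,10,1,1]$ on the partition $[1,1,1,2,2,2]$, computed via \eref{CLU:eq-wc}. Your added remarks on why the weighted-center value upper-bounds $J$ and on the possible negativity of squared distances are welcome clarifications the paper leaves implicit, but they do not change the argument.
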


We have already mentioned the Gower's et al.  \cite{Gower:1986} Theorem 7, stating that 
 any dissimilarity matrix $D$ may be turned to an Euclidean distance matrix,   by adding   constant $\sigma$ to the squared distances as follows:    
 $ d'(\mathfrak{z}, \mathfrak{y})=
 \sqrt{
 d(\mathfrak{z}, \mathfrak{y})^2+\sigma }$ 
 where $\sigma$ is a constant such that
 $\sigma\ge -\lambda_m$, $\lambda_m$ being the smallest eigenvalue of 
 $(\mathbf{I}-\mathbf{1}\mathbf{1}^T/m) (-\frac12 D_{sq}) (\mathbf{I}-\mathbf{1}\mathbf{1}^T/m)$, $D_{sq}$ is the matrix of squared values of elements of $D$,  $m$ is the number of rows/columns in $D$. 

Gower's Theorem 7 is actually \emph{wrong}. 
Let us continue the above example. 
Gowerr's constant for $_{E}F$ amounts to  $\sigma$=  757.205 . Upon modifying the distance matrix we get the new kernel matrix   $$ _{imp}F =
\begin{pmatrix}  
  582.2	& 253.6	& 128.6	& 3.6	& -471.4	& -496.4 \\ 
  253.6	& 782.2	& -371.4	& -496.4	& 228.6	& -396.4 \\ 
  128.6	& -371.4	& 832.2	& -471.4	& -346.4	& 228.6 \\ 
  3.6	& -496.4	& -471.4	& 582.2	& 128.6	& 253.6 \\ 
  -471.4	& 228.6	& -346.4	& 128.6	& 832.2	& -371.4 \\ 
  -496.4	& -396.4	& 228.6	& 253.6	& -371.4	& 782.2 
\end{pmatrix}  
$$
 which is again non Euclidean, because its lowest eigenvalue is equal      -378.603

Let us now propose a correction of Gower's "euclidesation" theorem:
\begin{theorem}\label{thKlvopotekEuclidesation}
Any dissimilarity matrix $D$ may be turned to an Euclidean distance matrix, see their Theorem 7, by adding an appropriate constant (to non-diagonal elements) , e.g.  
 $ d'(\mathfrak{z}, \mathfrak{y})=
 \sqrt{
 d(\mathfrak{z}, \mathfrak{y})^2+2\sigma }$ 
 where $\sigma$ is a constant such that
 $\sigma\ge -\lambda_m$, $\lambda_m$ being the smallest eigenvalue of 
 $(\mathbf{I}-\mathbf{1}\mathbf{1}^T/m) (-\frac12  D_{sq}) (\mathbf{I}-\mathbf{1}\mathbf{1}^T/m)$, $D_{sq}$ is the matrix of squared values of elements of $D$,  $m$ is the number of rows/columns in $D$. 
\end{theorem}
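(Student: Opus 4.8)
The plan is to reduce the statement to the already-established if-part of the corrected Gower theorem, namely point 2 of Theorem \ref{thEuclideanembeddingMAKversion}: it suffices to produce a single vector $\mathbf{s}$ with $\mathbf{s}^T\mathbf{1}=1$ for which the multiplicative-form matrix built from the modified squared-distance matrix is positive semidefinite. First I would record what the proposed transform does at the matrix level. Adding $2\sigma$ to every off-diagonal squared distance while keeping the diagonal zero means the new squared-distance matrix is
\begin{equation}
D'_{sq}=D_{sq}+2\sigma\bigl(\mathbf{1}\mathbf{1}^T-\mathbf{I}\bigr),
\end{equation}
which is again symmetric with zero diagonal, so $D'$ is a legitimate candidate distance matrix (and in the relevant case $\lambda_m<0$ one has $2\sigma>0$, so the entries stay non-negative).

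Next I would make the centering choice $\mathbf{s}=\mathbf{1}/m$, so that $\Fi{s}=H$ with $H:=\mathbf{I}-\mathbf{1}\mathbf{1}^T/m$, the orthogonal projector onto the complement of $\mathbf{1}$. The key computation is to push $D'_{sq}$ through the transform while exploiting $H\mathbf{1}=\mathbf{0}$ and $H=H^T=H^2$. Since $H\mathbf{1}\mathbf{1}^TH=\mathbf{0}$ and $H\mathbf{I}H=H$, the correction term collapses cleanly:
\begin{equation}
F'=-\tfrac12\,H D'_{sq} H=-\tfrac12\,H D_{sq} H+\sigma H=B+\sigma H,
\end{equation}
where $B:=-\tfrac12\,H D_{sq} H$ is exactly Gower's base matrix whose smallest eigenvalue is $\lambda_m$.

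The decisive step is the eigenanalysis of $F'=B+\sigma H$. Because $BH=HB=B$, the symmetric matrices $B$ and $H$ are simultaneously diagonalizable: $\mathbf{1}$ is a common eigenvector annihilated by both, while on the complement $\mathbf{1}^{\perp}$ the projector $H$ acts as the identity. Hence every eigenvalue of $F'$ is either $0$ (along $\mathbf{1}$) or of the form $\mu+\sigma$, where $\mu$ is an eigenvalue of $B$ on $\mathbf{1}^{\perp}$. Since $\lambda_m$ is the smallest eigenvalue of $B$ over all of $\mathbb{R}^m$, every such $\mu$ satisfies $\mu\ge\lambda_m$, so $\sigma\ge-\lambda_m$ forces $\mu+\sigma\ge\lambda_m+\sigma\ge0$; as the lone eigenvalue along $\mathbf{1}$ is $0$, the matrix $F'$ is positive semidefinite. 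Invoking point 2 of Theorem \ref{thEuclideanembeddingMAKversion} with this $\mathbf{s}$ then certifies that $D'$ is Euclidean, which is the assertion.

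Finally I would append the remark isolating where Gower erred and why the factor $2$ is forced. The $-\tfrac12$ in the transform, combined with $H\mathbf{I}H=H$, turns an additive constant $\kappa$ on the off-diagonal squared distances into a shift of $+\tfrac{\kappa}{2}H$ in $F'$. Thus Gower's formulation ${d'}^2=d^2+\sigma$ yields $F'=B+\tfrac{\sigma}{2}H$ and genuinely requires $\sigma\ge-2\lambda_m$, so his stated threshold $\sigma\ge-\lambda_m$ falls short by precisely a factor of two, which is exactly the gap exhibited by the numerical counterexample above. I do not anticipate a real obstacle; the only point demanding care is treating $H$ as a projector rather than the identity, so that the constant term contributes $\sigma H$ (not $\sigma\mathbf{I}$) and the spurious eigenvalue along $\mathbf{1}$ is correctly identified as $0$ and therefore harmless.
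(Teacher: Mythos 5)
Your proof is correct, and it rests on the same central identity as the paper's own proof: writing $H=\mathbf{I}-\mathbf{1}\mathbf{1}^T/m$, the modified squared distances give $F'=-\tfrac12 H D'_{sq}H = B+\sigma H$ with $B=-\tfrac12 HD_{sq}H$, so the whole question reduces to how the spectrum of $B+\sigma H$ relates to that of $B$. Where you genuinely differ is in how that spectral step is carried out. The paper argues eigenvector by eigenvector: it first shows that any eigenvector of $B$ for a non-zero eigenvalue can be replaced by one with zero component sum, and then needs a separate, rather laborious argument for repeated eigenvalues to guarantee that each such eigenspace is spanned by zero-sum vectors. Your observation that $B$ and $H$ commute ($BH=HB=B$), that both annihilate $\mathbf{1}$, and that $H$ acts as the identity on $\mathbf{1}^{\perp}$, makes $B$ and $H$ simultaneously diagonalizable and collapses all of that into one line: the spectrum of $F'$ is $\{0\}$ together with $\{\mu+\sigma\}$ for $\mu$ ranging over the eigenvalues of $B$ on $\mathbf{1}^{\perp}$, with no special treatment of multiplicities required. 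You also close the argument more carefully than the paper does: the paper stops at the assertion that once all eigenvalues are non-negative the matrix ``turns to an Euclidean one,'' whereas you explicitly invoke point 2 of Theorem \ref{thEuclideanembeddingMAKversion} (positive semidefiniteness of the multiplicative form for one admissible $\mathbf{s}$, here $\mathbf{s}=\mathbf{1}/m$, implies $D'$ is Euclidean), which is the logically required last step; you also check that $D'$ remains symmetric with zero diagonal and real-valued, which the paper leaves implicit. Your closing diagnosis --- that Gower's version ${d'}^2=d^2+\sigma$ yields only $F'=B+\tfrac{\sigma}{2}H$ and hence needs $\sigma\ge-2\lambda_m$ --- is exactly the factor-of-two gap exhibited by the paper's numerical counterexample. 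In short: same skeleton, but your eigenanalysis is tighter and your reduction to the corrected Gower theorem is more complete than the published argument.
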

\begin{proof}
The equation \eref{eq:combine_ts_to_t} allows us to conclude that   given
$$ F=-\frac12\left(\mathbf{I}-\frac{\mathbf{1}\mathbf{1}^T}{m}\right)D_{sq} 
\left(\mathbf{I}-\frac{\mathbf{1}\mathbf{1}^T}{m}\right)
$$
for a dissimilarity matrix $D$, 
the following holds:
$$ F=
(\mathbf{I}-\frac{\mathbf{1}\mathbf{1}^T}{m})
F (\mathbf{I}-\frac{\mathbf{1}\mathbf{1}^T}{m}) 
=F (\mathbf{I}-\frac{\mathbf{1}\mathbf{1}^T}{m})
= (\mathbf{I}-\frac{\mathbf{1}\mathbf{1}^T}{m}) F
$$
Let $\mathbf{v}$ be  an eigenvector  of $F$ for a non-zero   eigenvalue $\lambda$.
Therefore  
$$
\lambda (\mathbf{I}-\frac{\mathbf{1}\mathbf{1}^T}{m}) \mathbf{v} 
= (\mathbf{I}-\frac{\mathbf{1}\mathbf{1}^T}{m}) F\mathbf{v} 
=F  \mathbf{v}
= F (\mathbf{I}-\frac{\mathbf{1}\mathbf{1}^T}{m}) \mathbf{v }
$$
Assuming that $\mathbf{v'}= (\mathbf{I}-\frac{\mathbf{1}\mathbf{1}^T}{m}) \mathbf{v}$, we get:
$$\lambda \mathbf{v'} = F \mathbf{v'}$$ 
which means that $\mathbf{v'}$ is also an eigenvector of $F$ for the same eigenvalue.  
Notably, The sum of components of $\mathbf{v'}$ is equal zero.

Consider now the following expression for some number $\sigma$. 

$$F'=  \left(\mathbf{I}-\frac{\mathbf{1}\mathbf{1}^T}{m}\right)\left(-\frac12 D_{sq} 
- \sigma \left(\mathbf{1}\mathbf{1}^T-\mathbf{I} \right) \right)
\left(\mathbf{I}-\frac{\mathbf{1}\mathbf{1}^T}{m}\right)
$$
$$ = 
 \left(\mathbf{I}-\frac{\mathbf{1}\mathbf{1}^T}{m}\right)
\left(-\frac12 D_{sq}  \right)
\left(\mathbf{I}-\frac{\mathbf{1}\mathbf{1}^T}{m}\right)
-
 \left(\mathbf{I}-\frac{\mathbf{1}\mathbf{1}^T}{m}\right) 
  \sigma \left(\mathbf{1}\mathbf{1}^T-\mathbf{I} \right) 
\left(\mathbf{I}-\frac{\mathbf{1}\mathbf{1}^T}{m}\right)
$$
$$ = 
 \left(\mathbf{I}-\frac{\mathbf{1}\mathbf{1}^T}{m}\right)\left(-\frac12 D_{sq}  \right)
\left(\mathbf{I}-\frac{\mathbf{1}\mathbf{1}^T}{m}\right)
+\sigma 
 \left(\mathbf{I}-\frac{\mathbf{1}\mathbf{1}^T}{m}\right) 
$$
Now consider an eigenvector $\mathbf{v'}$ of $F'$ for a non-zero eigenvalue $\lambda'$, such that the sum of its components equals zero.  For each $\lambda$ such a vector always exists.
We see immediately that 
$$F'\mathbf{v'}=
 \left(\mathbf{I}-\frac{\mathbf{1}\mathbf{1}^T}{m}\right)\left(-\frac12 D_{sq}  \right)
\left(\mathbf{I}-\frac{\mathbf{1}\mathbf{1}^T}{m}\right) \mathbf{v'}
+\sigma \mathbf{v'}
$$
$$\lambda'  \mathbf{v'}= F  \mathbf{v'}
+\sigma \mathbf{v'}
$$
$$(\lambda'-\sigma)  \mathbf{v'}= F  \mathbf{v'}
$$
that is that $(\lambda'-\sigma)$ is an eigenvalue of $F$ with eigenvector $\mathbf{v'}$.  

This means that by subtracting    $\sigma$ from
non-diagonal elements of $-\frac12 D_{sq}$ in the computation of $F$ 
we can increase its eigenvalues of eigenvectors with zero sum by $\sigma$. 
But subtracting    $\sigma$ from
non-diagonal elements of $-\frac12 D_{sq}$
means adding $\sigma$ to 
non-diagonal elements of $\frac12 D_{sq}$, or 
 adding $2\sigma$ to 
non-diagonal elements of $D_{sq}$, or 
just replacing non-diagonal elements $d_{ij}$ of $D$ with
$\sqrt{d_{ij}^2+\sigma}$. 
If we add at least the negation of the lowest eigenvalue of non-Euclidean $F$ to all its eigenvalues, then of course it turns to an Euclidean one, given that all eigenvectors with non-zero eigenvalues have zero sums of components.  

How can we now tell if all such eigenvectors have zero sums? 
In case  that all eigenvalues are different, this is simple.
As shown, each eigenvalue has the zero sum eigenvector, and this is the only one up to scaling factor.

The details of handling special cases (of identical eigenvalues) follow now.
Consider the set of all eigenvectors related to a multiple eigenvalue. 
The whole set can be represented as a linear combination of some number of orthogonal vectors from this set with the number equal to the multiplicity of the eigenvalue.
Let $\mathbf{v}$ be one of these orthogonal vectors. 
Then any linear combination of all the other orthogonal  vectors is orthogonal to $\mathbf{v}$. 
Let $\mathbf{v"}$ be an example from this combination. 
Then clearly $\mathbf{v"}^T\mathbf{v}=0$.
But also  $ \mathbf{v"}^T(F\mathbf{v})=\lambda \mathbf{v"}^T\mathbf{v}=0$.
Hence $ \mathbf{v"}^T(F(\mathbf{I}-\frac{\mathbf{1}\mathbf{1}^T}{m}) \mathbf{v})=
  \mathbf{v"}^T\lambda \mathbf{v'}=0$. 
So $\mathbf{v'}=(\mathbf{I}-\frac{\mathbf{1}\mathbf{1}^T}{m}) \mathbf{v}$  is orthogonal to $\mathbf{v"}$. 
As the latter represents any vector orthogonal to $\mathbf{v}$ of the subspace co-spanned by $\mathbf{v}$, so  $\mathbf{v'}$ must be identical to $\mathbf{v}$ up to  scaling factor. 
So the subspace of eigenvectors can be spanned by a set of orthogonal vectors with component sums equal zero. 
Therefore all the eigenvectors of $F$ have this property and hence adding the respective constant adds to all the eigenvalues of the matrix $F$. This completes the proof.
\end{proof}

Let us illustrate the Theorem\ ref{thKlvopotekEuclidesation} by continuing the previous example. 
The euclidesation of the kernel $_{nE}F$, according to Theorem \ref{thKlvopotekEuclidesation}, will lead to the following kernel matrix:

Upon modifying the distance matrix according to our Theorem we get the new kernel matrix   $$ _{E}F =
\begin{pmatrix}  
  897.7	& 190.5	& 65.5	& -59.5	& -534.5	& -559.5 \\ 
  190.5	& 1097.7	& -434.5	& -559.5	& 165.5	& -459.5 \\ 
  65.5	& -434.5	& 1147.7	& -534.5	& -409.5	& 165.5 \\ 
  -59.5	& -559.5	& -534.5	& 897.7	& 65.5	& 190.5 \\ 
  -534.5	& 165.5	& -409.5	& 65.5	& 1147.7	& -434.5 \\ 
  -559.5	& -459.5	& 165.5	& 190.5	& -434.5	& 1097.7 
\end{pmatrix}  
$$
 which is now Euclidean, because its lowest eigenvalue is equal      0 
The kernel matrix $_{E}F$ implies  a clustering 
[ 1, 2, 1, 1, 2, 1] 
with the total value of the cost function  4353.821 .
Other clusterings would not do better. Check e.g. that the clustering   [1,1,1,2,2,2] produces   the cost function amounting to  4428.821 which is higher than what kernel-$k$-means produces.

Consider now a different clustering, [1,1,1,2,2,2], 
where you choose weighted cluster centers with weights [10,1,1,10,1,1],
instead of the $k$-means cluster centers. 
Then the cost function will amount to  5907.533 which is again higher than   what kernel-$k$-means produces. In Euclidean space, kernel-$k$-means produces appropriate results.

Note that the clustering obtained is identical with the clustering delivered by kernel-$k$-means from the original kernel matrix $_{nE}F$. 

Let us investigate this phenomenon more generally. 

\begin{theorem}
If we pursue the kernel-$k$-means clustering when seeking the optimum among cluster center sets being a subset of 
the set of  $\boldsymbol{\mu}_j^\Phi$ that may only be equal to 
\begin{equation}
\boldsymbol{\mu}_j^\Phi = \frac{1}{m_j} \sum_{ i \in C_j} \Phi( i)  
\label{CLU:eq-ker2cx}
\end{equation}
for some subset $C_j$ of all the data points and no other vectors in the feature space are taken into account,
then after adding a constant $\sigma$  to  the distance matrix as follows:
 $ d'(\mathfrak{z}, \mathfrak{y})=
 \sqrt{
 d(\mathfrak{z}, \mathfrak{y})^2+2\sigma }$ 
then the optimal clustering will remain the same. 

\end{theorem}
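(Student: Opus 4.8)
The plan is to show that, once cluster centres are fixed to be the centroids of the clusters, the kernel-$k$-means objective of a given partition $\{C_1,\dots,C_k\}$ can be written purely in terms of the original squared distances, and that the euclidesation shift changes this objective only by an additive constant that is the same for every partition. First I would record the purely algebraic centroid identity underlying the kernel-trick expression \eref{CLU:eq-kkk}: for any symmetric matrix with entries $k_{rs}$ and any cluster $C_j$ of size $m_j$,
\begin{equation*}
\sum_{i\in C_j}\Big(k_{ii}-\tfrac{2}{m_j}\sum_{h\in C_j}k_{hi}+\tfrac{1}{m_j^2}\sum_{r\in C_j}\sum_{s\in C_j}k_{rs}\Big)
=\frac{1}{2m_j}\sum_{r\in C_j}\sum_{s\in C_j}\big(k_{rr}+k_{ss}-2k_{rs}\big).
\end{equation*}
This identity holds without any positive-semidefiniteness assumption, so it is valid for non-Euclidean kernels as well; it is exactly the point at which the hypothesis of the theorem (centres restricted to the centroids of \eref{CLU:eq-ker2cx}, no other feature-space vectors admitted) is used, and it is the conceptual crux of the whole argument.

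Next I would connect the kernel entries back to the distance matrix. For the Gower transform $F=\FI (-\frac12) D_{sq}\FIT$ a direct expansion gives $f_{ii}+f_{jj}-2f_{ij}=d_{ij}^2$ for every $\mathbf{s}$ with $\mathbf{s}^T\mathbf{1}=1$, \emph{independently} of whether $D$ is Euclidean; this is point 3 of Theorem \ref{thEuclideanembeddingMAKversion} and is the same relation that drives the uniqueness argument of Section \ref{sec:correction}. Substituting $k_{rr}+k_{ss}-2k_{rs}=d_{rs}^2$ into the centroid identity, the total cost of a fixed partition under kernel $F$ collapses to
\begin{equation*}
J=\sum_{j=1}^k\frac{1}{2m_j}\sum_{r\in C_j}\sum_{s\in C_j}d_{rs}^2 .
\end{equation*}

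Finally I would apply the euclidesation map of Theorem \ref{thKlvopotekEuclidesation}, which replaces each off-diagonal $d_{rs}^2$ by $d_{rs}^2+2\sigma$ while leaving the zero diagonal untouched, and compute the resulting change in $J$. Since cluster $C_j$ contains exactly $m_j(m_j-1)$ off-diagonal ordered pairs, the correction term is
\begin{equation*}
\sum_{j=1}^k\frac{1}{2m_j}\,2\sigma\,m_j(m_j-1)=\sigma\sum_{j=1}^k(m_j-1)=\sigma(m-k),
\end{equation*}
which depends only on the number of points $m$ and the number of clusters $k$, not on the partition. Hence the transformed objective equals the original objective plus the fixed constant $\sigma(m-k)$, so a partition minimises one exactly when it minimises the other, and the optimal clustering is unchanged.

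The argument is essentially a counting identity, so I expect no deep obstacle once the two ingredients are assembled; the real care lies in two bookkeeping points. The first is insisting on the algebraic (kernel-entry) form of the centroid identity, so that the step is legitimate even when $F$ is not positive semidefinite and no genuine feature vectors $\Phi(i)$ exist. The second is keeping the diagonal of $D_{sq}$ fixed at zero while shifting only the off-diagonal entries by $2\sigma$, in exact accordance with the map $d'=\sqrt{d^2+2\sigma}$ applied to non-diagonal elements; this is what makes the per-cluster pair count equal $m_j(m_j-1)$ and guarantees the shift is partition-independent.
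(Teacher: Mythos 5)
Your proof is correct and follows essentially the same route as the paper: both arguments show that the restricted (centroid-only) objective of any fixed partition shifts by the partition-independent constant $\sigma(m-k)$ under the map $d'=\sqrt{d^2+2\sigma}$, so the optimum is preserved. The only difference is one of detail — the paper simply asserts the per-point increment $\sigma\frac{m_j-1}{m_j}$ and multiplies by $m_j$, whereas you derive the per-cluster increment $\sigma(m_j-1)$ from the pairwise-distance form of the within-cluster sum of squares, which supplies the computation the paper leaves implicit and makes explicit that no positive semidefiniteness is needed.
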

\begin{proof}
If we add in a cluster $C_j$ of cardinality $m_j$
for an element $i$ to all its distances  $\sigma$, then its squared distance to the cluster center 
will increase by $\sigma\frac{m_j-1}{m_j}$ because $d(i,i)=0$ is unchanged .
So in all the cluster cost function  will change by 
$\sigma\frac{m_j-1}{m_j}m_j= \sigma\cdot(m_j-1)$. 
So the overall cost function of all $k$ clusters will increase by $\sigma\cdot(m-k)$.
That is it is independent of  the actual cost function. 
Hence the optimum clustering of $k$-means, achievable by kernel-$k$-means, will remain unchanged after this addition.
\end{proof}

Under these circumstances

\begin{theorem}
For kernel-$k$-means, 
  adding a constant to squared dissimilarity measures of 
non-identical elements is 
a clustering preserving and embeddability preserving operation. 
\end{theorem}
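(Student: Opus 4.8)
The final statement combines two previously established facts into a single conclusion, so the plan is to show that the statement follows by simply assembling the two preceding theorems rather than by a fresh computation. The claim asserts that adding a constant to the squared dissimilarities of non-identical elements is both \emph{embeddability preserving} and \emph{clustering preserving} for kernel-$k$-means. These are precisely the two properties proved just above: Theorem \ref{thKlvopotekEuclidesation} delivers embeddability, and the immediately preceding weighted-center theorem delivers clustering invariance. The whole proof is therefore a matter of citing both.

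First I would recall the embeddability part. By Theorem \ref{thKlvopotekEuclidesation}, replacing each non-diagonal entry $d_{ij}$ of $D$ by $\sqrt{d_{ij}^2+2\sigma}$ for a sufficiently large $\sigma$ (namely $\sigma \ge -\lambda_m$, where $\lambda_m$ is the smallest eigenvalue of $(\mathbf{I}-\mathbf{1}\mathbf{1}^T/m)(-\frac12 D_{sq})(\mathbf{I}-\mathbf{1}\mathbf{1}^T/m)$) shifts every eigenvalue of the associated $F$ matrix upward by $\sigma$, hence renders $F$ positive semidefinite. By the correction of Gower's result (points 2 and 4 of Theorem \ref{thEuclideanembeddingMAKversion}), positive semidefiniteness of $F$ is equivalent to $D$ being Euclidean and yields an explicit embedding $Y$ with $F=YY^T$. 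This establishes that the operation is embeddability preserving, in the strong sense that it can even \emph{create} embeddability where none existed.

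Next I would invoke the clustering-preservation part, which is exactly the content of the theorem immediately preceding this one. That result shows that adding $2\sigma$ to each squared off-diagonal dissimilarity increases the total kernel-$k$-means cost of \emph{every} candidate clustering by the same fixed amount $\sigma\cdot(m-k)$, because in any cluster $C_j$ of cardinality $m_j$ each point's squared distance to its center grows by $\sigma\frac{m_j-1}{m_j}$, summing to $\sigma(m_j-1)$ per cluster and $\sigma(m-k)$ overall. Since this offset is independent of the particular assignment, the $\argmin$ over clusterings is unchanged; hence the optimal clustering obtained by kernel-$k$-means is preserved. Combining the two observations gives the statement.

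The only subtlety I anticipate — and it is where I would be most careful rather than where any genuine obstacle lies — is reconciling the two quantifiers on $\sigma$. The clustering-preservation argument holds for \emph{any} $\sigma$, whereas embeddability is guaranteed only once $\sigma$ is at least $-\lambda_m$. The clean way to state the conclusion is therefore: for \emph{every} $\sigma$ the clustering is preserved, and for every $\sigma \ge -\lambda_m$ the transformed matrix is additionally Euclidean, so that for such $\sigma$ both properties hold simultaneously. A secondary point worth noting explicitly is that the operation is applied only to non-diagonal elements (the self-distances $d(i,i)=0$ stay fixed), which is exactly what makes the per-cluster increment $\sigma\frac{m_j-1}{m_j}$ come out correctly; this restriction is shared by both underlying theorems, so no inconsistency arises. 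No hard estimate or new machinery is needed — the proof is assembly.
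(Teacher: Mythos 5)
Your proposal is correct and matches the paper's intent exactly: the paper states this theorem immediately after the euclidesation result (Theorem \ref{thKlvopotekEuclidesation}) and the cost-shift-by-$\sigma(m-k)$ argument, offering no separate proof because it is meant as a direct assembly of those two facts, which is precisely what you do. Your explicit remark about reconciling the quantifiers on $\sigma$ (clustering preservation for all $\sigma$, embeddability only for $\sigma \ge -\lambda_m$) is a point the paper leaves implicit and is a welcome clarification rather than a deviation.
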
 

Note that the transformation mentioned above (1) increases all distances, 
(2) the absolute increase in distances is the largest for the smallest distances, and the smallest for the largest,
(3) therefore no new clustering structures occur under this transformation. 
We define in this way a new axiom/property of $k$-means - in that we require that clustering algorithm yields same result under  the mentioned  distance change/transformation.

The idea behind is that in the permissible
 domain for $k$-means (Euclidean) the optimum is unchanged 
if we add constant to squared distances between different elements. 
By means of conceptual extension we can carry on this assumption
 backwards into non-Euclidean distances. 

Then we need to define under what regime we compute the permissible optimum of $k$-means, because in the whole space itself it is no true. Only if we limit the permissible space in a reasonable way, we can still assume that we are computing $k$-means optimum. 
So if we agree that the kernel function $\Phi()$ for kernel $k$-means is deemed to transmit the data points into the Euclidean space under the mentioned invariance transformation, then it is permissible to apply kernel-$k$-means without checking for embeddability.


\section{Concluding remarks}\label{sec:conclusions}

 In this paper we   corrected the proof of the Theorem 2 from the 
Gower's paper \cite[page 5]{Gower:1982}. 
This correction was needed in order to establish the existence of 
the kernel function used commonly in the kernel trick e.g. for $k$-means clustering algorithm, on the grounds of distance matrix. 

Let us underline here that we did not impose any apriorical restrictions on the form of $\Phi()$ function itself. 
It may be a linear or non-linear mapping from the sample space to the feature space. But what we insist on is that the feature space has to be Euclidean. 
This is the requirement for applicability of (kernel) $k$-means  clustering algorithm. 
If the feature space is not metric, the results of  (kernel)  $k$-means clustering are questionable. 

But this is not enough. 
The same kernel matrix may be related to infinitely many $\Phi()$ functions.

The question that was left open by Gower was: do there exist special cases where two different    $\Phi()$ functions, complying with a given kernel matrix, generate different distance matrices in the feature space, maybe in some special, "sublimated" cases? 
The answer given to this open question  in this paper is definitely NO. 
We closed all the conceivable gaps in this respect. 
So usage of (linear and non-linear) kernel matrices that are semipositive definite, is safe in this respect.

\KMEANSSTUFF{  

Furthermore we resolved the issue of applicability of kernel-$k$-means for non-embeddable kernel matrices. 
If we accept the eigen-value-shift transformation as a legitimate kernel matrix transformation 
and the kernel-$k$-means clustering in the   kernel matrix obtained via such euclidesation as the valid clustering for the original kernel matrix, then we can apply kernel-$k$-means also in the non-Euclidean space.

\subsection*{Software}
Please feel free to experiment with an R package (source code)   
implementing kernel $k$-means functionality 
\url{install.packages("https://home.ipipan.waw.pl/m.klopotek/ipi_archiv/kernelKmeansAndPlusPlusDemo_1.0.tar.gz", repos = NULL, type ="source")}
}

\bibliographystyle{plain}
\bibliography{kernelFunctionFI_bib}

\begin{thebibliography}{10}

\bibitem{Balaji:2007}
R.~Balaji1 and R.B. Bapat.
\newblock On euclidean distance matrices.
\newblock {\em Linear Algebra and its Applications}, 424(1):108--–117, 2007.

\bibitem{Chitta:2011}
Radha Chitta, Rong Jin, Timothy~C. Havens, and Anil~K. Jain.
\newblock Approximate kernel k-means: Solution to large scale kernel
  clustering.
\newblock In {\em Proceedings of the 17th ACM SIGKDD International Conference
  on Knowledge Discovery and Data Mining}, KDD '11, pages 895--903, New York,
  NY, USA, 2011. ACM.

\bibitem{Dhillon:2004}
I.S. Dhillon, Y.~Guan, and B.~Kulis.
\newblock Kernel k-means: Spectral clustering and normalized cuts.
\newblock In {\em Proceedings of the Tenth ACM SIGKDD International Conference
  on Knowledge Discovery and Data Mining}, KDD '04, pages 551--556, New York,
  NY, USA, 2004. ACM.

\bibitem{Gower:1982}
J.~C. Gower.
\newblock Euclidean distance geometry.
\newblock {\em Math. Scientist}, 7:1--14, 1982.

\bibitem{Gower:1985}
J.~C. Gower.
\newblock Properties of {E}uclidean and non-{E}uclidean distance matrices.
\newblock {\em Linear Algebra and its Applications}, 67:81--97, 1985.

\bibitem{Gower:1986}
J.C. Gower and P.~Legendre.
\newblock Metric and {E}uclidean properties of dissimilarity coefficients.
\newblock {\em Journal of classification}, 3(1):5--48, 1986.
\newblock Here Gower:1982 is cited in theorem 4, but with a different form of
  condditions for D and s.

\bibitem{Handhayania:2015}
T.~Handhayania and L.~Hiryantob.
\newblock Intelligent kernel k-means for clustering gene expression.
\newblock In {\em International Conference on Computer Science and
  Computational Intelligence (ICCSCI 2015) Procedia Computer Science},
  volume~59, pages 171--177, 2015.

\bibitem{MAK:2017}
Mieczyslaw~A. Klopotek.
\newblock On the existence of kernel function for kernel-trick of k-means.
\newblock In {\em Foundations of Intelligent Systems - 23rd International
  Symposium, {ISMIS} 2017, Warsaw, Poland, June 26-29, 2017, Proceedings},
  pages 97--104, 2017.

\bibitem{Li:2013}
C.~Li, M.~Georgiopoulos, and G.~C. Anagnostopoulos.
\newblock Kernel-based distance metric learning in the output space.
\newblock In {\em International Joint Conference on Neural Networks (IJCNN)},
  Dallas, TX, August 04-09, 08/2013 2013.

\bibitem{Nikolentzos:2017}
Giannis Nikolentzos, Polykarpos Meladianos, and Michalis Vazirgiannis.
\newblock Matching node embeddings for graph similarity.
\newblock In {\em Proceedings of the Thirty-First {AAAI} Conference on
  Artificial Intelligence, February 4-9, 2017, San Francisco, California,
  {USA.}}, pages 2429--2435, 2017.

\bibitem{Pavoine:2004}
Sandrine Pavoine, Anne-Béatrice Dufour, and Daniel Chessel.
\newblock From dissimilarities among species to dissimilarities among
  communities: a double principal coordinate analysis.
\newblock {\em Journal of Theoretical Biology}, 228(4):523 -- 537, 2004.

\bibitem{Pevkalska:2002}
Elzbieta Pekalska, Pavel Paclik, and Robert P.~W. Duin.
\newblock A generalized kernel approach to dissimilarity-based classification.
\newblock {\em J. Mach. Learn. Res.}, 2:175--211, March 2002.

\bibitem{Richter:2017}
Ronald Richter, Jan~Eric Kyprianidis, Boris Springborn, and Marc Alexa.
\newblock Constrained modelling of 3-valent meshes using a hyperbolic
  deformation metric.
\newblock {\em Comput. Graph. Forum}, 36(6):62--75, 2017.

\bibitem{Sarma:2013}
T.~Sarma, P~Vishwanath, and B.~Reddy.
\newblock Single pass kernel k -means clustering method.
\newblock {\em Sadhan}, 38, Part 3:407--419, 2013.

\bibitem{Schoenberg:1938}
I.~J. Schoenberg.
\newblock Metric spaces and positive definite functions.
\newblock {\em Trans. Amer. Math. Soc.}, 44:522--536, 1938.

\bibitem{Schoenberg:1935}
I.J. Schoenberg.
\newblock Remarks to {M}aurice {F}réchet’s article “{S}ur la définition
  axiomatique d’une classe d’espace distanciés vectoriellement applicable sur
  l’espace de {H}ilbert”.
\newblock {\em Annals of Mathematics}, 36(3):724--–732, 1935.

\bibitem{Schoelkopf:2001}
Bernhard Sch{\"o}lkopf.
\newblock The kernel trick for distances.
\newblock In {\em Advances in Neural Information Processing Systems 13}, pages
  301--307, Cambridge, MA, USA, April 2001. Max-Planck-Gesellschaft, MIT Press.

\bibitem{Tzortzis:2009}
G.~Tzortzis and Likas A.
\newblock The global kernel k-means algorithm for clustering in feature space.
\newblock {\em IEEE Trans Neural Netw.}, 7(20):1181--94, Jul 2009.

\bibitem{Yaqiang:2018}
Yaqiang Yao and Huanhuan Chen.
\newblock Multiple kernel $k$-means clustering by selecting representative
  kernels.
\newblock \url{https://arxiv.org/abs/1811.00264}, 11 2018.

\end{thebibliography}
\end{document}